\def\eqref#1{equation~\ref{#1}}
\def\1{\bm{1}}
\DeclareMathAlphabet{\mathsfit}{\encodingdefault}{\sfdefault}{m}{sl}
\SetMathAlphabet{\mathsfit}{bold}{\encodingdefault}{\sfdefault}{bx}{n}
\DeclareMathOperator*{\argmin}{arg\,min}
\theoremstyle{remark}
\newtheorem{theorem}{Theorem}
\newtheorem{lemma}{Lemma}
\title{How to Distribute Data across Tasks for Meta-learning?}
\author{
    Alexandru Cioba, Michael Bromberg, Qian Wang, Ritwik Niyogi, \\Georgios Batzolis, Jezabel Garcia, Da-shan Shiu, Alberto Bernacchia
}
\begin{document}

\maketitle

\begin{abstract}
Meta-learning models transfer the knowledge acquired from previous tasks to quickly learn new ones. They are trained on benchmarks with a fixed number of data points per task. This number is usually arbitrary and it is unknown how it affects performance at testing. Since labelling of data is expensive, finding the optimal allocation of labels across training tasks may reduce costs. Given a fixed budget of labels, should we use a small number of highly labelled tasks, or many tasks with few labels each? Should we allocate more labels to some tasks and less to others?
We show that: 1) If tasks are homogeneous, there is a uniform optimal allocation, whereby all tasks get the same amount of data; 2) At fixed budget, there is a trade-off between number of tasks and number of data points per task, with a unique solution for the optimum; 3) When trained separately, harder task should get more data, at the cost of a smaller number of tasks; 4) When training on a mixture of easy and hard tasks, more data should be allocated to easy tasks. Interestingly, Neuroscience experiments have shown that human visual skills also transfer better from easy tasks. We prove these results mathematically on mixed linear regression, and we show empirically that the same results hold for few-shot image classification on CIFAR-FS and mini-ImageNet. Our results provide guidance for allocating labels across tasks when collecting data for meta-learning.

\end{abstract}
\section{Introduction}

Deep learning (DL) models require a large amount of data in order to perform well, when trained from scratch, but labeling data is expensive and time consuming.
An effective approach to avoid the costs of collecting and labeling a large amount of data is transfer learning: train a model on one big dataset, or a few related datasets that are already available, and then fine-tune the model on the target dataset, which can be of much smaller size \cite{donahue_decaf:_2014}.
In this context, there has been a recent surge of interest in the field of \emph{meta-learning}, which is inspired by the ability of humans to \emph{learn how to learn} \cite{hospedales_meta-learning_2020}. 
A model is \emph{meta-trained} on a large number of tasks, each characterized by a small dataset, and \emph{meta-tested} on the target dataset.  

The number of data points per task is usually set to an arbitrary number in standard meta-learning benchmarks.
For example, in few-shot image classification benchmarks, such as \emph{mini}-ImageNet \cite{vinyals_matching_2017}, \cite{ravi_optimization_2017} and CIFAR-FS \cite{bertinetto_meta-learning_2019}, each task has five classes ($5$-way) and either one or five images per class is used during testing ($1$-shot or $5$-shots).
During training, the number of data points per class is usually set to an arbitrary value, and it remains unclear how this number should be set to achieve the best testing performance.
We focus on training, rather than testing data, because the former can be optimized by following specific procedures for data partitioning and collection.

Intuitively, one would think that the performance always improves with the number of training data points.
However, if the total number of labels is limited, is it better to have a large number of tasks with little data in each task, or a smaller number of highly labelled tasks?
Should some tasks be given more labels than other tasks?
The answers to these questions remain unknown, although they are important to inform the design of new meta-learning benchmarks and the application of meta-learning algorithms to real
problems, especially given that data labelling is costly.
Hence, we address these questions for the first time, for a specific meta-learning algorithm: MAML \cite{finn_model-agnostic_2017}.
Our contributions are:

\begin{itemize}
    \item We introduce the problem of optimizing data allocation in meta-learning, with a fixed budget of total data points to distribute across training tasks. We show that, when tasks are homogeneous, the optimal solution is distributing data uniformly across tasks: all tasks get the same amount of data. This setting is considered in most meta-learning problems (See \emph{'The data allocation problem'} section \ref{section:data_allocation}, Theorem \ref{unifThm}). 
    
    \item When data is distributed uniformly across tasks, we show that the trade-off between number of tasks and number of data points per task, at fixed budget, has a unique solution for the optimum for large budgets (section \ref{sec:uniform}\emph{'Solution of the uniform allocation'}, Theorems \ref{underThm}, \ref{optallocThm}, Figures \ref{fig:linreg}, \ref{fig:image_classification}).
    
    \item Next, we consider the problem of two sets of tasks, easy and hard. When trained separately, we show that hard tasks need more data (per task) than easy tasks. While it is intuitive that hard tasks require more data for training, we emphasize that the total number of data points is fixed by the given budget, therefore the number of tasks is smaller (section \ref{sec:easyhardonly}\emph{'Separate training'}, Figure \ref{fig:easyhardonly}).
    
    \item Finally, we study the problem of training a non-homogeneous mixture of easy and hard tasks. In contrast to when they are trained separately, we show that better performance is obtained by allocating more data to easy tasks. Our interpretation is that, as long as learning transfers from easy to hard tasks, it is better to train more on the former since they are easier to learn. Interestingly, human visual skills also transfer better from easy tasks \cite{ahissar_task_1997} (section \emph{'Joint training'} \ref{sec:easyhard}, Figure \ref{fig:easyhard}).
    
\end{itemize}

We prove results mathematically on mixed linear regression, and confirm those results empirically on few-shot image classification on CIFAR-FS and \emph{mini}-ImageNet (code in the supplementary material).

\section{Related Work}

In the context of meta-learning and mixed linear regression, the work of \cite{kong_meta-learning_2020} asks whether more tasks with a small amount of data can compensate for a lack of tasks with big data.
However, they do not address the problem of finding the optimal allocation of data for a fixed budget, which is the main scope of our work.
The work of \cite{shekhar_adaptive_2020} studies the problem of allocating a fixed budget of data points to a finite set of discrete distributions.
In contrast to our work, they do not study the meta-learning problem and their data has no labels.
Similar to us, a few theoretical studies looked at the problem of mixed linear regression in the context of meta-learning (\cite{bernacchia_meta-learning_2021}, \cite{denevi_learning_2018}, \cite{bai_how_2021}, \cite{tripuraneni_provable_2020}, \cite{du_few-shot_2020}, \cite{collins_why_2020}, \cite{gao_modeling_2020}).
However, none of these studies look into the problem of data allocation, which is our main focus.

An alternative approach to avoid labelling a large amount of data is \emph{active learning}, where a model learns with fewer labels by accurately selecting which data to learn from \cite{settles_active_2010}.
In the context of meta-learning, the option of implementing active learning has been considered in a few recent studies \cite{bachman_learning_2017}, \cite{garcia_few-shot_2018}, \cite{kim_bayesian_2018}, \cite{finn_probabilistic_2019}, \cite{requeima_fast_2020}.
However, they considered the active labeling of data within a given task, for the purpose of improving performance in that task only.
Instead, we ask how data should be distributed across tasks. 

In the context of recommender systems and text classification, a few studies considered whether labeling a data point, within a given task, may increase performance not only in that task but also in all other tasks.
This problem has been referred to as \emph{multi-task active learning} \cite{reichart_multi-task_2008}, \cite{zhang_multi-task_2010}, \cite{saha_online_2011}, \cite{harpale_multi-task_2012}, \cite{fang_active_2017}, or \emph{multi-domain active learning} \cite{li_multi-domain_2012}, \cite{zhang_multi-domain_2016}.
However, none of these studies consider the problem of meta-learning with a fixed budget.
A few studies have looked into actively choosing the next task in a sequence of tasks \cite{ruvolo_active_2013}, \cite{pentina_curriculum_2015}, \cite{pentina_multi-task_2017}, \cite{sun_active_2018}, but they do not look at how to distribute data across tasks.

\section{Meta-learning}\label{sec:meta}

The reader may refer to \cite{hospedales_meta-learning_2020} for a general introduction to meta-learning with neural networks.
In this work, we consider the cross-task setting, where we have a distribution of tasks $\tau\sim p(\tau)$ and a distribution of data points for a given task $\mathcal D^{\tau}\sim p( \mathcal D |\tau)$.
Each task has a loss function $\mathcal{L}(\theta;\mathcal{D})$ that depends on a set of parameters $\theta$ and data $\mathcal{D}$.
Here we assume that the loss has the same functional form across tasks (e.g. square loss if they are all regression tasks, cross-entropy if they are all classification tasks).
The goal of meta-learning is minimizing the mean of the loss across tasks and data.

In the \emph{meta-training} phase, $m$ tasks $(\tau_i)_{i=1}^m$ are sampled from $p(\tau)$ and, for each task, $n_i^t$ training data points $\mathcal D_i^t=(\mathbf{x}_{ij}^t,y_{ij}^t)_{j=1}^{n_i^t}$ and $n_i^v$ validation data points $\mathcal D_i^v=(\mathbf{x}_{ij}^v,y_{ij}^v)_{j=1}^{n_i^v}$, 
are sampled independently from the same distribution $p( \mathcal D |\tau_i)$.
We assume that the data is given by input $\mathbf{x}$ - label $y$ pairs.
The meta-training loss is a function of the data and the meta-parameters $\boldsymbol\omega$, is equal to
\begin{equation}\label{meta-loss-empirical}
\mathcal{L}^{meta}\left(\boldsymbol\omega;\mathcal D^t,\mathcal D^v\right)=\frac{1}{m}\sum_{i=1}^m\frac{1}{n_i^v} \sum_{j=1}^{n_i^v}\mathcal{L}\Big(\boldsymbol\theta(\boldsymbol\omega,\mathcal{D}_i^t); \mathbf{x}_{ij}^v,y_{ij}^v\Big)
\end{equation}
The parameters are adapted to each task $i$ by using the transformation $\theta(\boldsymbol\omega,\mathcal{D}_i^t)$. 
Different meta-learning algorithms correspond to a different choice of this transformation.
Here we use MAML \cite{finn_model-agnostic_2017}, which performs a fixed number of stochastic gradient descent steps with respect to the data for each task.
With a single gradient step, that is equal to
\begin{equation}
\label{adaptation}
\theta(\boldsymbol\omega,\mathcal D_i^t)=\boldsymbol\omega-\frac{\alpha_i}{n_i^t}\sum_{j=1}^{n_i^t}\nabla_{\boldsymbol\omega} \mathcal{L}\left(\boldsymbol\omega;  \mathbf{x}_{ij}^t,y_{ij}^t\right)
\end{equation}
where $\alpha_i$ is the learning rate for task $i$.
This equation corresponds to a full-batch update, employing all the data for a given task, but mini-batch gradient updates can be performed as well.
A number $k$ of gradient steps may be used instead of one.
This step is referred to as \emph{inner loop} of meta-learning.

The loss in Eq.(\ref{meta-loss-empirical}) is minimized with respect to the meta-parameters $\boldsymbol\omega$, namely
\begin{equation}
\label{omegastarmain}
\boldsymbol\omega^\star\left(\mathcal D^t,\mathcal D^v\right)=\mathop{\argmin}_{\boldsymbol\omega}\mathcal{L}^{meta}\left(\boldsymbol\omega;\mathcal D^t,\mathcal D^v\right)
\end{equation}
This minimum is searched by stochastic gradient descent, using a distinct learning rate $\alpha_{meta}$.
At each gradient step, Eq.(\ref{adaptation}) is computed for each task and the gradient of Eq.(\ref{meta-loss-empirical}) with respect to $\boldsymbol\omega$ is taken.
This step is referred to as \emph{outer loop} of meta-learning.
Note that Eq.(\ref{meta-loss-empirical}) includes all $m$ tasks, which translates into full-batch training when taking the gradient.
However, a mini-batch of tasks may be also drawn from the set of $m$ tasks at each step of the optimization.
Standard optimization procedures such as early stopping and scheduling of the learning rate $\alpha_{meta}$ can be applied.
In the case of mixed linear regression (section \emph{'Solution of the uniform allocation'} \ref{sec:uniform}), we solve Eq.(\ref{omegastarmain}) exactly by linear algebra.

In the \emph{meta-testing} phase, the test loss $\mathcal{L}^{test}$ is computed using the optimal value $\boldsymbol\omega^\star$ and test datasets $\tilde{\mathcal D}^t,\tilde{\mathcal D}^v$
\begin{equation}
\label{test_loss}
\mathcal{L}^{test}\left(\mathcal D^t,\mathcal D^v,\tilde{\mathcal D}^t,\tilde{\mathcal D}^v\right)=\mathcal{L}^{meta}\left(\boldsymbol\omega^\star\left(\mathcal D^t,\mathcal D^v\right);\tilde{\mathcal D}^t,\tilde{\mathcal D}^v\right)
\end{equation}
The test datasets correspond to a new draw of both tasks and data points. 
The values of hyperparameters $m, n^t, n^v, \alpha, k$ for meta-testing are not necessarily the same as those used during meta-training.
The main focus of this work is optimizing $m, n_i^t, n_i^v$ for meta-training, while they are fixed during meta-testing. 
To evaluate the performance of the model for a given choice of the hyperparameters, we compute the average test loss, defined as
\begin{equation}
\label{avg_test_loss}
\overline{\mathcal{L}}^{test}(n_1^t, \ldots n_{m}^t, n_1^v,\ldots n_{m}^v) =\mathop{\mathbb{E}}_{\mathcal D_t}\mathop{\mathbb{E}}_{\mathcal D_v}\mathop{\mathbb{E}}_{\tilde{\mathcal D}_t}\mathop{\mathbb{E}}_{\tilde{\mathcal D}_v}\mathcal{L}^{test}
\end{equation}

\section{The data allocation problem}\label{section:data_allocation}

We denote the number of data points per task $i$ during meta-training as $N_i=n_i^t+n_i^v$, equal to the sum of training and validation data.
In all experiments we used an equal split of training and validation, $n_i^t=n_i^v=n_i$.
We assume that the total number of data points for meta-training, referred to as \emph{budget}, is constant and equal to $b=\sum_{i=1}^{m}N_i=2\sum_{i=1}^{m}n_i$.
This is equal to the total number of data points across all training tasks, and is assumed fixed, while the number of data points per task $N_i$ are allowed to vary.
We denote by $\mathbf{n}$ the vector of $n_i$ values, $\mathbf{n}=(n_1,\ldots,n_m)$, and define the \emph{data allocation} problem of finding the value of $\mathbf{n}$ such that the average test loss is minimized
\begin{equation}
\label{dataalloc}
\mathbf{n}^\star=\mathop{\argmin}_{\mathbf{n}\;:\;\sum_{i=1}^{m}n_i=b/2}\overline{\mathcal{L}}^{test}(n_1,\ldots,n_m)
\end{equation}
The optimal value $\mathbf{n}^\star$ is referred to as \emph{optimal allocation}, it may depend on the budget and on other hyperparameters of the model.
The optimal allocation determines which tasks should get more or less data, for a fixed budget $b$ and number of tasks $m$.
In the following theorem, we provide conditions under which the optimal data allocation is uniform.


\vspace{\baselineskip}

\begin{theorem}
\label{unifThm}
If the test loss $\overline{\mathcal{L}}^{test}$ is invariant under permutations of task allocations, i.e. permutations of its arguments $(n_1,\ldots,n_m)$ then the uniform allocation $\mathbf{n}=(n,\ldots,n)$ with $n=\frac{b}{2m}$ is a local extremum of the constrained optimization problem, provided that it is non-degenerate. 

Furthermore, if

\begin{equation}\label{mini-schur}
    \overline{\mathcal{L}}^{test}\left(\frac {n_1 + n_2}{2}, \frac{n_1 + n_2}{2}, n_3,...,n_m\right)\leq{\mathcal{L}}^{test}(n_1,...,n_m), 
\end{equation}
    for all $n_1,...,n_m$, subject to $\sum_{i=1}^k n_i = \frac {b}{2}$,
then the uniform allocation is the global minimum of the data allocation problem.
\end{theorem}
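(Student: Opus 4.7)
The plan is to treat the two claims separately: the first follows from Lagrange multipliers plus a symmetry argument, while the second proceeds by iterating the pairwise averaging inequality (\ref{mini-schur}) to show that $\overline{\mathcal{L}}^{test}$ is monotone with respect to majorization on the constraint simplex.

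For local extremality, the constraint $\sum_{i=1}^m n_i = b/2$ is linear with gradient $(1,\ldots,1)$, so the Lagrange condition at an interior critical point reduces to requiring all partial derivatives $\partial_i \overline{\mathcal{L}}^{test}$ to agree. Permutation invariance supplies this automatically at the symmetric allocation $\mathbf{n}=(n,\ldots,n)$: differentiating the identity $\overline{\mathcal{L}}^{test}(\mathbf{n}) = \overline{\mathcal{L}}^{test}(\sigma \cdot \mathbf{n})$ for a transposition $\sigma = (i\,j)$ and evaluating at a point fixed by $\sigma$ yields $\partial_i \overline{\mathcal{L}}^{test} = \partial_j \overline{\mathcal{L}}^{test}$ at the uniform allocation. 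Hence the first-order condition is satisfied automatically, and the non-degeneracy hypothesis rules out higher-order flat directions along the constraint manifold.

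For global optimality, I would run an iterative pair-averaging argument driven by (\ref{mini-schur}). Writing $\bar n = b/(2m)$, define the dispersion $V(\mathbf{n}) = \sum_k (n_k - \bar n)^2$. For any feasible $\mathbf{n}$ with $V(\mathbf{n}) > 0$, the sum constraint forces a pair $i,j$ with $n_i < \bar n < n_j$. Using permutation invariance of $\overline{\mathcal{L}}^{test}$ to move $(n_i, n_j)$ to the first two slots, applying (\ref{mini-schur}) to replace both by their common mean, and permuting back, produces a feasible $\mathbf{n}'$ with $\overline{\mathcal{L}}^{test}(\mathbf{n}') \leq \overline{\mathcal{L}}^{test}(\mathbf{n})$ and, by direct expansion, $V(\mathbf{n}') = V(\mathbf{n}) - (n_j - n_i)^2/2$. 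Selecting $i,j$ at each step as the argmin and argmax of the current allocation controls $(n_j - n_i)^2$ from below by a multiple of $V(\mathbf{n})/m$, so the iteration drives $V$ to zero geometrically and the allocation to $(\bar n,\ldots,\bar n)$. Continuity of $\overline{\mathcal{L}}^{test}$ in $\mathbf{n}$ then yields $\overline{\mathcal{L}}^{test}(\mathbf{n}) \geq \overline{\mathcal{L}}^{test}(\bar n,\ldots,\bar n)$ in the limit. The main obstacle is packaging this convergence cleanly; it is the classical Hardy--Littlewood--P\'olya theory of T-transforms, and the whole argument can equivalently be phrased by observing that (\ref{mini-schur}) together with permutation symmetry makes $\overline{\mathcal{L}}^{test}$ Schur-convex on the slice $\sum_i n_i = b/2$, whose majorization minimum is the centroid.
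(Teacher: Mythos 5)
Your treatment of the second claim is sound and is in fact a genuinely different route from the paper: the paper simply cites \cite{keilson_global_1967} for the statement that the pairwise-averaging inequality (\ref{mini-schur}) plus symmetry forces the global minimum at the centroid, whereas you reprove it from scratch via iterated T-transforms, with the explicit dispersion decrement $V(\mathbf{n}') = V(\mathbf{n}) - (n_j-n_i)^2/2$ and the $(1-\tfrac{1}{2m})$ geometric contraction obtained by always averaging the extreme pair. That is a correct and self-contained majorization (Hardy--Littlewood--P\'olya / Schur-convexity) argument; the only thing to flag is that it needs continuity of $\overline{\mathcal{L}}^{test}$ on the constraint slice, which you invoke but which is not among the stated hypotheses --- without some such regularity the conclusion can fail (one can build a permutation-invariant function satisfying (\ref{mini-schur}) whose value at the centroid exceeds its value off a null set of exactly-reachable points), so you should state it explicitly, as Keilson's theorem does.

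The first claim, however, has a genuine gap. Your Lagrange-multiplier computation only shows that the uniform allocation is a \emph{critical point} of the constrained problem: permutation invariance gives $\partial_i\overline{\mathcal{L}}^{test}=\partial_j\overline{\mathcal{L}}^{test}$ at the symmetric point, hence the gradient is parallel to $(1,\ldots,1)$ and the first-order condition holds. But the theorem asserts a local \emph{extremum}, and a non-degenerate critical point of a constrained problem in dimension $m-1\ge 2$ can perfectly well be a saddle; non-degeneracy does not ``rule out higher-order flat directions'' in the sense you need --- what must be ruled out is an indefinite constrained Hessian. This is exactly the content of the Purkiss principle that the paper cites (\cite{waterhouse_symmetric_1983}): the tangent space of the constraint $\{\sum_i v_i=0\}$ carries the standard representation of the symmetric group, which is irreducible, and the Hessian of the symmetric loss at the symmetric point is an $S_m$-equivariant quadratic form on it; by Schur's lemma it is a scalar multiple of the invariant inner product, hence either zero (the degenerate case excluded by hypothesis) or definite, so the critical point is forced to be a local maximum or minimum. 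Your proposal is missing this representation-theoretic (or any equivalent second-order) step, and without it the first part does not follow.
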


\begin{proof}
The proof of the first part (see Modern Purkiss principle) is given by \cite{waterhouse_symmetric_1983}, noting that the action of the symmetric group preserves the constraint and is irreducible, while the proof of the second part (global minimum) is given by \cite{keilson_global_1967}.
\end{proof}

Note that convexity of the test loss is a sufficient condition for the global minimum. 
We show in the \emph{'Mixed linear regression'} section \ref{sec:linear} that the Purkiss principle applies to the case of mixed linear regression with homogeneous tasks.
This result motivates, in addition to the data allocation problem (\ref{dataalloc}), the study of the \emph{uniform allocation} problem, in which the number of data points is assumed to be equal across tasks, but now the number of tasks $m$ is allowed to vary.
The solution of this problem is defined by
\begin{equation}
\label{unifdataalloc}
n^\star=\mathop{\argmin}_{n\;:\;nm=b/2}\overline{\mathcal{L}}^{test}(n)
\end{equation}
In this case, the question is whether to have more data and less tasks, or less data and more tasks, for the fixed budget $b$.
In the next sections, we study both problems of data allocation and uniform allocation on mixed linear regression and few-shot image classification on CIFAR-FS and \emph{mini}-ImageNet.

\vspace{-1mm}

\subsection{Computation of the optimum}
\label{sec:compopt}

In the case of linear regression, we derive exact expressions for $\overline{\mathcal{L}}^{test}$ and $\mathbf{n}^\star$ in some limiting cases.
In few-shot image classification, and in further linear regression experiments, we estimate $\overline{\mathcal{L}}^{test}$ empirically by searching a grid of values of $\mathbf{n}$.
We average the test loss over multiple repetitions with different data samples and different initial conditions for $\boldsymbol\omega$.
Then, we determine the mean and standard deviation for the optimum $\mathbf{n}^\star$ by the following procedure: we generate multiple instances of test loss/accuracy vs $\mathbf{n}$ by sampling uniformly from the repetitions at each value of $\mathbf{n}$, we record the optimal $\mathbf{n}^\star$ of each instance and construct a distribution of $\mathbf{n}^\star$ across all instances.
We also provide nonlinear (sinusoid) regression experiments in the appendix.

\section{Solution of the uniform allocation}
\label{sec:uniform}
In this section we consider the problem of uniform allocation, while the non-uniform case is studied in the section \emph{'Easy vs hard tasks'} \ref{sec:easyvshard}.
We look at the trade-off between having either more tasks or more data per task, for a fixed budget, and we show that this problem has a unique optimum.
We study this trade-off on two problems: mixed linear regression, where we compute a closed form expression for the optimum, and few-shot image classification, where we show empirical results.

\subsection{Mixed linear regression}
\label{sec:linear}

In mixed linear regression, each task is characterized by a different linear function, and the loss is the mean squared error:

\begin{figure*}[th]
    \centering
    \includegraphics[width = 0.75\textwidth]{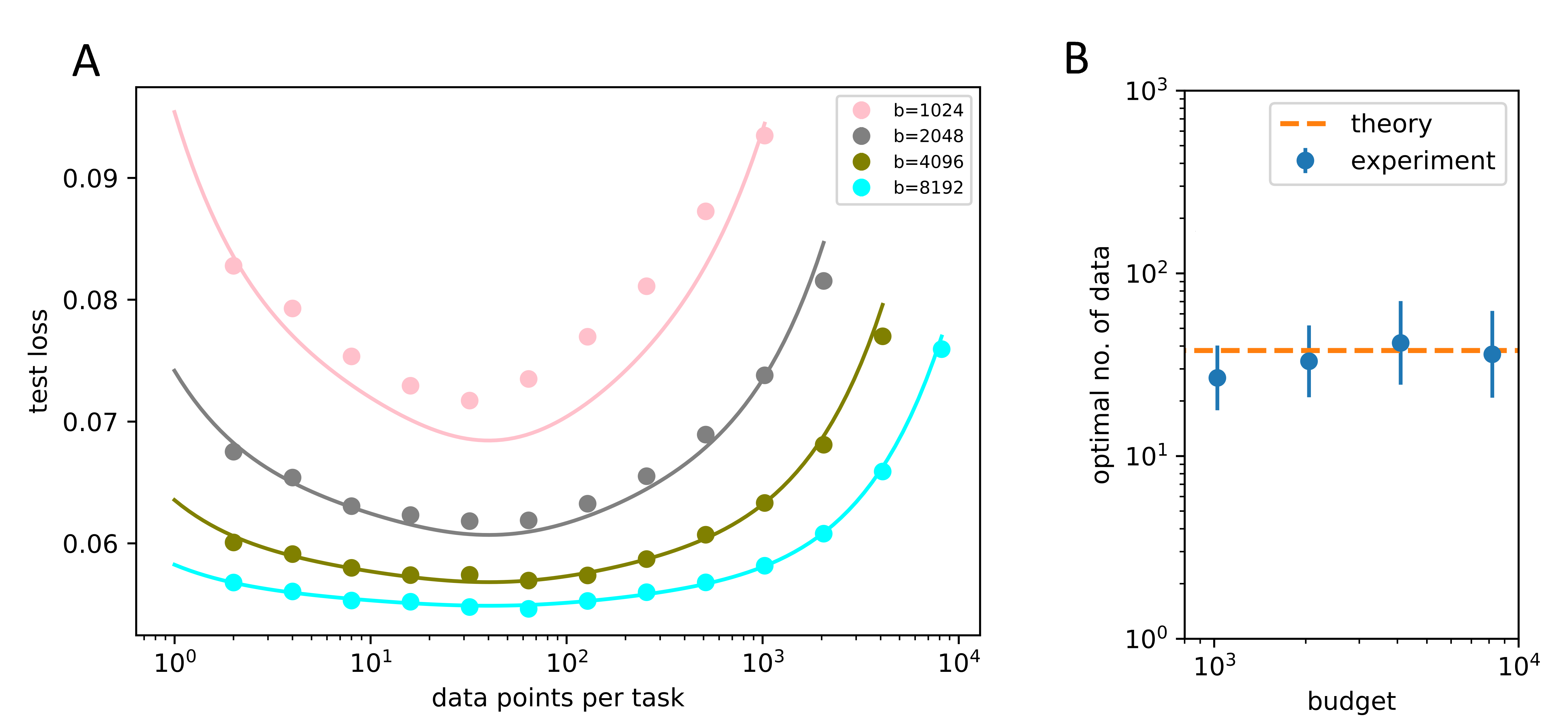}
    \caption{\textbf{The optimal number of data points per task is constant for large budgets: linear regression}. \textbf{A}: Test loss vs. number of data points per task at fixed budget (more data points imply less tasks). Dots: experimental values; Lines: theoretical prediction Eq.(\ref{testloss}), different lines correspond to different budgets (legend). As predicted by Theorem \ref{underThm}, theoretical prediction is more accurate for larger budgets. Each curve has a unique optimum. \textbf{B}: Optimal number of data points per task vs. budget, the four points correspond to the four curves in panel A. The theoretical prediction of Eq.(\ref{optimalN}) (orange line) is close to the estimated experimental optimum (see section \ref{sec:compopt}\emph{'Computation of the optimum'} for its computation).}
    \label{fig:linreg}
\end{figure*}

\begin{align}
\mathcal{L}\left(\boldsymbol\theta; \mathbf{x},y\right)=\frac{1}{2}\left(y-\boldsymbol\theta^T\mathbf{x}\right)^2
\end{align}
where the label $y$ is a scalar, while the input $\mathbf{x}$ and the parameter $\boldsymbol\theta$ are vectors of $p$ components.
Each task corresponds to a different value of the generating parameter $\boldsymbol\theta$.
Across tasks, that is distributed according to a Gaussian
\begin{align}
\label{generative_task}
\boldsymbol\theta\sim\mathcal{N}\left(\boldsymbol\theta_0,\frac{\nu^2}{p}I_p\right)
\end{align}
where $\boldsymbol\theta_0$, $\nu$ are hyperparameters, and $I_p$ is the $p\times p$ identity matrix.
The distribution of data for a given task is given by
\begin{align}
\label{generative_data}
&y\;|\;\mathbf{x},\boldsymbol\theta\sim\mathcal{N}\left(\boldsymbol\theta^T\mathbf{x},\sigma^2\right)\\
&\mathbf{x} \sim \mathcal{N}\left(\mathbf{0},\lambda^2I_p\right)
\end{align}
where $\sigma$ is the label noise and $\lambda$ is the input variability.
Each data point is independently drawn from this distribution, for either training or validation set.
We distinguish between the case of \emph{homogeneous} tasks, where all tasks have the same values of $(\sigma,\lambda)$, and \emph{non-homogeneous} tasks, where we allow those values to vary across tasks.
In the following theorem, we compute an approximate expression for the average test loss for mixed linear regression.

\vspace{\baselineskip}

\begin{theorem} 
\label{underThm}
    Consider the algorithm of section \ref{sec:meta} (MAML one-step) and data generated according to the mixed linear regression model. Let $\sum_{i=1}^mn_i>p$ (underparameterized model), and let $n_i=n_i(\xi)$, $m=m(\xi)$ be any functions of order $\Theta(\xi)$ as $\xi\rightarrow\infty$. Then, the average test loss is equal to
\begin{align}
\label{testloss}
&\overline{\mathcal{L}}^{test}=\frac{\sigma_r^2}{2}\left(1+\frac{\lambda_r^4\alpha_r^2p}{n_r}\right)+\frac{\lambda_r^2h_r\nu^2}{2}+\nonumber\\
&+\frac{\lambda_r^2h_rp}{2}\left[\sum_{i=1}^m\lambda_i^2h_i\right]^{-2}\sum_{i=1}^m\frac{\lambda_i^2}{n_i}\Bigg\{\nonumber\\
&\sigma_i^2\left[h_i+\frac{\lambda_i^4\alpha_i^2}{n_i}\left[\left(n_i+1\right)g_{1i}+p\;g_{2i}\right]\right]+\nonumber\\
&+\frac{\nu^2}{p}\lambda_i^2\left[\left(n_i+1\right)g_{3i}+p\;g_{4i}\right]\Bigg\}+O\left(\xi^{-3}\right)
\end{align}
where the subscript $i$ denotes meta-training hyperparameters for task $i$, while the subscript $r$ denotes meta-testing hyperparameters.
We have defined the function $h_i=\left(1-\lambda_i^2\alpha_i\right)^2+\lambda_i^4\alpha_i^2\frac{p+1}{n_i}$, and the functions $g$ are polynomials in $\lambda_i^2\alpha_i$ with coefficients of order $O(1)$, defined in the appendix, Equation (\ref{gpoly1}).
\end{theorem}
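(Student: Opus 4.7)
The plan is to exploit the fact that for mixed linear regression one-step MAML is affine in both $\boldsymbol\omega$ and in the data, so $\boldsymbol\omega^\star$ admits a closed form and the averaged test loss reduces to Gaussian moment integrals that I would expand in powers of $\xi^{-1}$.

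First I would make the inner update explicit. With squared loss $\nabla_{\boldsymbol\omega}\mathcal{L}=\mathbf{x}(\mathbf{x}^T\boldsymbol\omega-y)$, so Eq.(\ref{adaptation}) becomes $\boldsymbol\theta_i(\boldsymbol\omega)=(\mathbf{I}_p-\alpha_i\mathbf{S}_i^t)\boldsymbol\omega+\alpha_i\mathbf{c}_i^t$ with $\mathbf{S}_i^t=\frac{1}{n_i^t}\sum_j\mathbf{x}_{ij}^t\mathbf{x}_{ij}^{tT}$ and $\mathbf{c}_i^t=\frac{1}{n_i^t}\sum_j\mathbf{x}_{ij}^t y_{ij}^t$. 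Substituting this into Eq.(\ref{meta-loss-empirical}) makes the meta-loss quadratic in $\boldsymbol\omega$, and setting its gradient to zero yields $\boldsymbol\omega^\star=\mathbf{A}^{-1}\mathbf{b}$ with
\begin{equation}
\mathbf{A}=\frac{1}{m}\sum_{i=1}^m(\mathbf{I}_p-\alpha_i\mathbf{S}_i^t)\mathbf{S}_i^v(\mathbf{I}_p-\alpha_i\mathbf{S}_i^t),
\end{equation}
and $\mathbf{b}$ an analogous linear combination of $\mathbf{S}_i^v$, $\mathbf{c}_i^t$ and $\mathbf{c}_i^v$; the underparameterized hypothesis $\sum_i n_i^t>p$ guarantees that $\mathbf{A}$ is almost surely invertible.

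Next I would write Eq.(\ref{avg_test_loss}) for this closed form and separate the contributions of each source of randomness. The irreducible label noise yields $\sigma_r^2/2$; evaluating the one-step adaptation on the $n_r$-point test training set produces the $\sigma_r^2\lambda_r^4\alpha_r^2 p/(2n_r)$ factor and, combined with the task prior $\boldsymbol\theta_r\sim\mathcal{N}(\boldsymbol\theta_0,(\nu^2/p)\mathbf{I}_p)$, the $\lambda_r^2 h_r\nu^2/2$ term, where $h_i=(1-\lambda_i^2\alpha_i)^2+\lambda_i^4\alpha_i^2(p+1)/n_i^t$ is exactly $\lambda_i^{-2}\mathbb{E}[(\mathbf{I}-\alpha_i\mathbf{S}_i^t)\mathbf{S}_i^v(\mathbf{I}-\alpha_i\mathbf{S}_i^t)]$ after Isserlis-contracting the second Wishart moment. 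Everything else in Eq.(\ref{testloss}) is controlled by $\mathbb{E}[(\boldsymbol\omega^\star-\boldsymbol\theta_0)(\boldsymbol\omega^\star-\boldsymbol\theta_0)^T]$, which is where the asymptotic expansion enters. Under the scaling $n_i,m=\Theta(\xi)$ one has $\mathbb{E}[\mathbf{A}]=\bar a\,\mathbf{I}_p$ with $\bar a=\frac{1}{m}\sum_i\lambda_i^2 h_i$ and fluctuations $\boldsymbol\Delta=\mathbf{A}-\mathbb{E}[\mathbf{A}]$ of stochastic order $\xi^{-1/2}$, so the Neumann series $\mathbf{A}^{-1}=\bar{\mathbf{A}}^{-1}-\bar{\mathbf{A}}^{-1}\boldsymbol\Delta\bar{\mathbf{A}}^{-1}+\bar{\mathbf{A}}^{-1}(\boldsymbol\Delta\bar{\mathbf{A}}^{-1})^2+O(\xi^{-3/2})$ is valid; inserting it into $\boldsymbol\omega^\star\boldsymbol\omega^{\star T}$, the linear-in-$\boldsymbol\Delta$ piece vanishes in expectation and the quadratic piece leaves a single sum over $i$ of $O(\xi^{-1})$ contributions. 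Evaluating those by Wick contractions in $\mathbf{x}_{ij}^t,\mathbf{x}_{ij}^v,\boldsymbol\theta_i$ and the label noises produces the four polynomials $g_{1i},\ldots,g_{4i}$ in $\lambda_i^2\alpha_i$, where the $(n_i+1)$ factors arise from pairings that share a data index and the factor $p$ from traces over the ambient dimension.

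The hard part will be the bookkeeping. Many $O(\xi^{-1})$ terms appear simultaneously from the quadratic expansion of $\mathbf{A}^{-1}$, from $\mathbb{E}[\mathbf{b}\mathbf{b}^T]$ and from cross correlations such as $\mathbb{E}[\bar{\mathbf{A}}^{-1}\boldsymbol\Delta\bar{\mathbf{A}}^{-1}\mathbf{b}\mathbf{b}^T]$; verifying that they reassemble into the stated per-task form and that the neglected remainder is genuinely $O(\xi^{-3})$ rather than $O(\xi^{-2})$ requires careful tracking of every Wick pattern and a dimensional count of how sample indices can coincide. What saves the calculation is isotropy: the Gaussian priors on $\mathbf{x}$ and $\boldsymbol\theta$ force every intermediate matrix to be proportional to $\mathbf{I}_p$ at leading order, collapsing the combinatorics into the closed form of Eq.(\ref{testloss}).
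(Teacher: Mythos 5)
Your plan is essentially the paper's own proof: the paper also exploits that the meta-loss is quadratic in $\boldsymbol\omega$, writes the minimizer in least-squares form ($\boldsymbol\omega^\star=(B^TB)^{-1}B^T\boldsymbol\gamma$, identical to your normal equations $\mathbf{A}\boldsymbol\omega=\mathbf{b}$ with $\mathbf{A}=\tfrac1m B^TB$), averages over test/target data first to reduce everything to $\frac{\sigma_r^2}{2}\bigl(1+\frac{\lambda_r^4\alpha_r^2p}{n_r}\bigr)+\frac{\lambda_r^2h_r}{2}\bigl(\nu^2+\mathbb{E}\left|\boldsymbol\omega^\star-\mathbf{w}_0\right|^2\bigr)$, and then combines concentration of the Gram matrix around $\sum_i\lambda_i^2h_i I_p$ with exact Wishart/Isserlis moments to get the $g_{1i},\dots,g_{4i}$. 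One small clarification relative to your phrasing: in the paper the $g$ polynomials come from the exact expectation of the zeroth-order term $\mathbb{E}[B^T\Gamma B]$ (the per-task noise covariance), while the linear- and quadratic-in-$\boldsymbol\Delta$ Neumann corrections (which do not vanish identically, since $\boldsymbol\Delta$ is correlated with the same $X_i$'s, but are suppressed by task averaging) are exactly the terms absorbed into the $O(\xi^{-3})$ remainder — i.e., the bookkeeping you flag as the hard part.
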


\begin{proof}
The proof is given in the appendix. It provides a generalization of the results of \cite{bernacchia_meta-learning_2021} in the case of non-homogeneous tasks and parametric input variability.
\end{proof}

When tasks are homogeneous ($\sigma_i=\sigma$, $\lambda_i=\lambda$) and a fixed learning rate is used for all meta-training tasks ($\alpha_i=\alpha$), we note that the test loss (\ref{testloss}) is permutation invariant, thus the Purkiss principle of Theorem \ref{unifThm} applies. 
Therefore, in the remainder of this section we consider only the case of uniform allocation ($n_i=n$).
Non-homogeneous tasks and non-uniform allocation are studied in the \emph{'Easy vs hard tasks'} section \ref{sec:easyvshard}. 
Note also that Theorem \ref{underThm} assumes an underparameterized model ($p<\sum_{i=1}^mn_i$).
For completeness, we also study the overparameterized case in the appendix.

\begin{figure*}[ht]
    \centering
    \includegraphics[width = 0.9\textwidth]{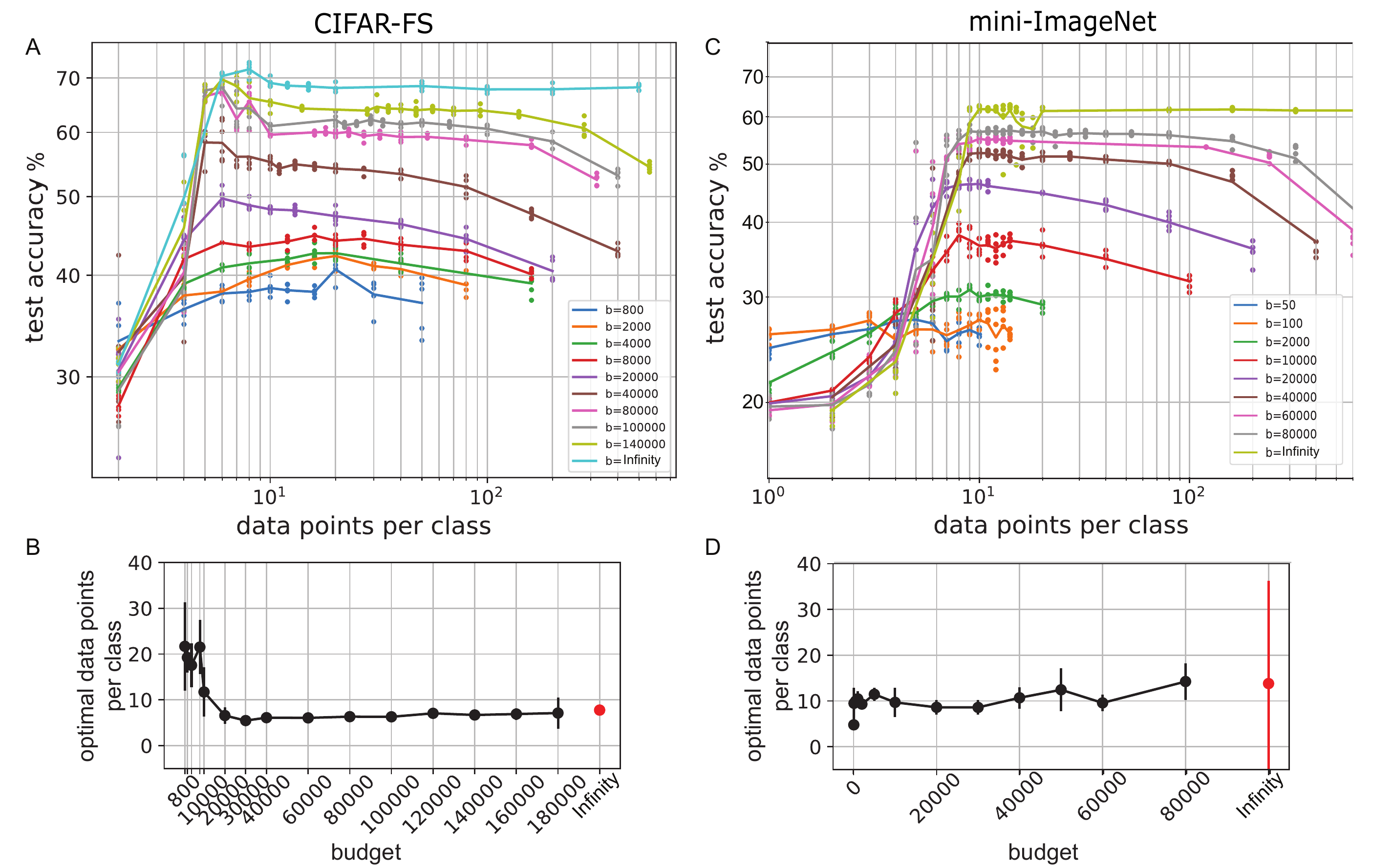}
    \caption{\textbf{The optimal number of data points per task is constant for large budgets: Few-shot image classification}: \textbf{A,B}: CIFAR-FS, \textbf{D,E}: \emph{mini}-ImageNet dataset. Format is the same as of Figure \ref{fig:linreg}. Curves are noisy and tend to flatten at large budgets, but there seems to be a unique optimum for each budget value. The optimum is computed empirically as explained in the \emph{'Computation of the optimum'} section \ref{sec:compopt}. The optimal number of data points converges to $\sim 7$ for CIFAR-FS and to $\sim 10$ for \emph{mini}-ImageNet. Error bars show standard deviation.}
    \label{fig:image_classification}
\end{figure*}

Figure \ref{fig:linreg}A plots the meta-test loss of mixed linear regression as a function of $n$ for different budgets.
It shows a good agreement between the experiments and the theoretical prediction of equation (\ref{testloss}) (see appendix for details).
According to equation (\ref{testloss}), the error between theory and experiment is expected to be of order $O\left(b^{-3/2}\right)$, since $b\sim O\left(\xi^2\right)$, indeed theoretical prediction is more accurate for larger budgets.
As expected, test loss decreases with budget, since more data implies better performance.
We emphasize that curves have a convex shape, implying that there is a unique optimal value of $n$ for each budget.
While the curves tend to flatten at large budgets, the optimum remains approximately constant, as shown in Figure \ref{fig:linreg}B.
In the following theorem, we compute the unique solution of the uniform allocation problem for mixed linear regression.

\vspace{\baselineskip}

\begin{theorem} 
\label{optallocThm}
    Under the assumptions of Theorem \ref{underThm}, consider the test loss of Equation (\ref{testloss}) and the uniform allocation problem in Equation (\ref{unifdataalloc}) . Furthermore, let $p=p(\xi)$ be a function of order $\Theta(\xi)$ as $\xi\rightarrow\infty$, neglect orders $O\left(\xi^{-2}\right)$ in Equation (\ref{testloss}). Then, for all sufficiently small values of the learning rate $\alpha$, the uniform allocation problem has a unique minimum, which does not depend on the budget and is given by 
\begin{equation}
\label{optimalN}
\nonumber
n^\star=Cp\\ 
\end{equation}
where the constant $C$ is defined in Equation (\ref{optimaln}) in the appendix. 

\end{theorem}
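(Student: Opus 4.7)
The plan is to exploit the fact that under the homogeneous assumption $\sigma_i=\sigma$, $\lambda_i=\lambda$, $\alpha_i=\alpha$, Equation~(\ref{testloss}) becomes permutation invariant, so substituting the uniform allocation $n_i=n$ together with the budget constraint $mn=b/2$ collapses the test loss to a one-variable objective; the crucial rescaling is then $\eta = n/p$.

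First I would substitute the homogeneous parameters and $n_i=n$ into~(\ref{testloss}). All $h_i$ and $g_{ji}$ coincide across tasks, so $\sum_i \lambda_i^2 h_i = m\lambda^2 h$ and the inner sum contributes an overall factor of $m$. After cancellation with the $(m\lambda^2 h)^{-2}$ factor and substitution of $m=b/(2n)$, the third term of~(\ref{testloss}) takes the form
\begin{equation*}
\frac{\lambda_r^2 h_r\,p}{b\,\lambda^2}\cdot\frac{\Phi(n,p,\alpha)}{h(n)^2},
\end{equation*}
where $\Phi$ denotes the brace in~(\ref{testloss}). The first two terms of~(\ref{testloss}) are independent of $n$, and $b$ appears only in the global $p/b$ prefactor of the $n$-dependent part, so the optimum is automatically $b$-independent and the minimization reduces to that of $\Phi(n,p,\alpha)/h(n)^2$.

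Second, I would rescale by $\eta = n/p$. Since $p=\Theta(\xi)$, the factors $(n+1)/n$ and $(p+1)/n$ inside $h$ and $\Phi$ agree with their $\eta$-only limits up to $O(1/p)=O(\xi^{-1})$, and after multiplication by the $p/b=O(\xi^{-1})$ prefactor these corrections contribute only $O(\xi^{-2})$ to $\overline{\mathcal{L}}^{test}$, which is absorbable in the neglected order. Writing $h_0(\eta,\alpha) = (1-\lambda^2\alpha)^2 + \lambda^4\alpha^2/\eta$ and $\tilde\Phi(\eta,\alpha)$ for the corresponding limits, the problem becomes minimization over $\eta>0$ of $J(\eta,\alpha) = \tilde\Phi(\eta,\alpha)/h_0(\eta,\alpha)^2$. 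Setting $\partial J/\partial\eta=0$, clearing denominators and multiplying by $\eta^2$ produces a cubic in $\eta$ whose coefficients are explicit polynomials in $a=\lambda^2\alpha$ built from $\sigma^2$, $\nu^2\lambda^2$, and the $g_j(a)$ of~(\ref{gpoly1}). Because the $\eta^3$ coefficient is $\Theta(1)$ while the $\eta^2,\eta^1,\eta^0$ coefficients vanish at $a=0$, the dominant-balance ansatz $\eta=a\zeta$ reduces the equation, after division by $a^3$, to
\begin{equation*}
\nu^2\lambda^2 g_3^{(0)}\,\zeta^3 \;+\; \bigl[\sigma^2(1-g_2^{(0)}) + 2\nu^2\lambda^2 g_4^{(0)}\bigr]\,\zeta \;+\; O(a) \;=\; 0,
\end{equation*}
where $g_j^{(0)} = g_j(0)$. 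A sign analysis using the explicit values of $g_j(0)$ from the appendix shows this has a unique positive root $\zeta^\star$, and the implicit function theorem at this non-degenerate root extends uniqueness to all sufficiently small $\alpha>0$, giving $n^\star = Cp$ with $C = a\,\zeta^\star(a)$.

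The main obstacle is the sign analysis ruling out additional positive roots of the reduced cubic and confirming positivity of $\partial^2 J/\partial\eta^2$ at $\zeta^\star$; both require the explicit polynomial forms of the $g_j$ from the appendix. A secondary, routine, issue is to check that the $O(\xi^{-2})$ remainder from the $\eta$-only approximation, together with the $O(\xi^{-3})$ error in Theorem~\ref{underThm}, does not shift the optimum at leading order, which follows from the non-degeneracy of $\zeta^\star$ via standard perturbation arguments.
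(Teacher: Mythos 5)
Your reduction matches the paper's: specialize Eq.~(\ref{testloss}) to homogeneous tasks and uniform allocation, note that $b$ enters only through the prefactor $p/b$ of the $n$-dependent part (hence budget-independence of the optimizer), pass to the variable $n/p$ while absorbing the $(n+1)/n$, $(p+1)/n$ corrections into the neglected order, and set the $n$-derivative to zero to obtain a cubic in $n/p$ whose coefficients $A,B,C,D$ are polynomials in $\alpha'=\lambda^2\alpha$ with $A=\Theta(1)$ and $B,C,D\to 0$ as $\alpha'\to 0$. Up to that point you are on the paper's track.

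The gap is in your dominant-balance step, and it is fatal as written. The coefficients scale as $A=O(1)$, $B=O(\alpha'^2)$, $C=O(\alpha'^3)$, $D=O(\alpha'^4)$, so under your ansatz $\eta=a\zeta$ (with $a=\alpha'$) every term except $A\eta^3$ is $O(a^4)$; dividing by $a^3$ leaves $A\zeta^3+O(a)=0$, i.e.\ a degenerate triple root $\zeta=0$ at leading order, where the implicit function theorem cannot be anchored. Your written reduced equation is also internally inconsistent: with $g_2(0)=g_3(0)=g_4(0)=1$ its leading form is $\nu^2\lambda^2\zeta^3+2\nu^2\lambda^2\zeta=0$, which has no positive root, so the ``unique positive root $\zeta^\star$'' you invoke does not exist. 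The correct balance is between the cubic term and the constant term: since $D\simeq-2\alpha'^4(\nu^2+\sigma'^2)<0$, setting $\eta=a^{4/3}\zeta$ gives $\zeta^{\star 3}=2(1+\sigma'^2/\nu^2)$ at leading order, i.e.\ $n^\star\sim\alpha'^{4/3}p$, exactly the paper's small-$\alpha'$ approximation (\ref{Nstar})/(\ref{nstarapprox}). Moreover, a local balance argument only controls roots in the scaling window; to rule out other real stationary points the paper instead computes the discriminant of the full cubic, whose leading term $-27A^2D^2=O(\alpha'^8)$ is strictly negative, so for all sufficiently small $|\alpha'|$ there is exactly one real root, which must be the minimum because the loss is smooth in $n$ (as $g_2>0$) and grows for large $|n|$; Cardano's formula then yields the explicit constant $C$ in Eq.~(\ref{optimaln}). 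If you want to salvage your route, replace the ansatz by $\eta=a^{4/3}\zeta$ and supplement the local analysis with a global uniqueness argument (e.g.\ the discriminant sign, as in the paper).
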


\begin{proof}
The proof is provided in the appendix..
\end{proof}

This theorem implies that once the suitable error terms in the approximation of $\mathcal{L}^{test}$ are ignored, there is a unique and constant optimum for the number of data points per task at large budgets. Note that the magnitude of the error terms does depend on the budget and the relation between $n$, $p$ and $m$.
While the theoretical optimum does not depend on the budget, it may depend on whether tasks are hard or easy (see section \ref{sec:easyvshard} \emph{'Easy vs hard tasks'}). 
Figure \ref{fig:linreg}B shows the optimal $n^\star$ as a function of budget,  
it shows that the theoretical value of the optimum (orange line) agrees with the experiments.

\begin{figure*}[ht]
    \centering
    \includegraphics[width = 0.9\textwidth]{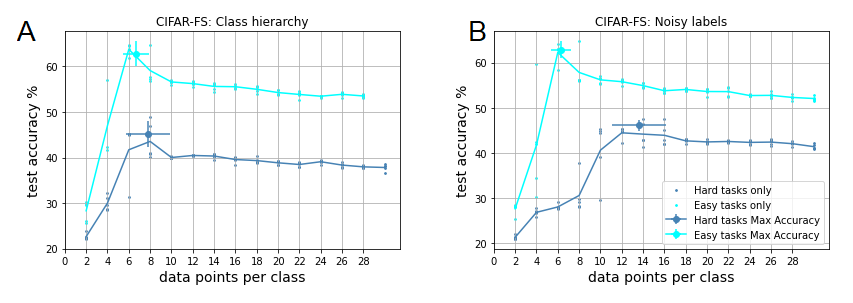}
    \caption{\textbf{Hard tasks prefer more data (and less tasks) when trained separately}. Few shot image classification on CIFAR-FS. \textbf{A} Tasks are made harder by drawing classes within a hierarchy; \textbf{B} Tasks are made harder by adding label noise. Both plots show test accuracy versus the number of data points per class, as in Figure \ref{fig:image_classification}A. Each plot shows an estimate of the point of maximum accuracy, with error bars showing standard deviation (see the  \emph{'Computation of the optimum'} section \ref{sec:compopt} for its computation). In both cases, performance is lower and the optimal number of data points per class is larger for hard tasks.}
    \label{fig:easyhardonly}
\end{figure*}

\subsection{Few-shot image classification}
\label{sec:image}

We next tested whether the results of mixed linear regression generalize to the more interesting problem of few-show image classification.
In this case, the loss function is the cross-entropy, $\mathcal{L}\left(\theta; x,y\right)=-y^T\log\left(f_\theta(x)\right)$,
where $y$ is a one-hot encoding of the class label, and $f_\theta(x)$ is the output vector of a neural network with parameters $\theta$ and input $x$.
We use a convolutional neural network commonly used with MAML on image classification \cite{finn_model-agnostic_2017} (see appendix \ref{appendix:image} for details).

We investigate the CIFAR-FS \cite{bertinetto_meta-learning_2019} and \emph{mini}-ImageNet \cite{vinyals_matching_2017} datasets, which are few-shot versions of CIFAR-100 and ImageNet, respectively.
Both classification problems are $5$-way: each task contains $5$ classes.
We refer to the number of data points \emph{per class}, which has to be multiplied by $5$ to find the number of data points \emph{per task}. 
As in previous studies, we used $5$ \textit{shots} during testing ($5$ data points per class), while the number of shots during training depends on the data allocation. 

In previous work \cite{vinyals_matching_2017}, \cite{bertinetto_meta-learning_2019}, we note that tasks are usually re-sampled indefinitely until convergence of the model, thus there is no limit on the number of tasks that can be generated.
We instead pre-sample a set of tasks in order to fix the budget constraint. 
For comparison, we also run experiments in the usual way, and we call this the \emph{infinite budget} case.
However, the total number of labels is fixed and, even if tasks are re-sampled indefinitely, it does not imply that the amount of data is infinite, rather the same image may appear in multiple tasks. 

As expected, Figure \ref{fig:image_classification} shows that test performance improves with the budget, for both CIFAR-FS and \emph{mini}-ImageNet (Figure \ref{fig:image_classification}A,C). 
For infinite budget, the accuracy is similar to previously reported values ($\sim 63\%$ for \emph{mini}-ImageNet \cite{finn_model-agnostic_2017}, $\sim 71\%$ for CIFAR-FS \cite{bertinetto_meta-learning_2019}).
For CIFAR-FS, the optimal number of data points per class was $\sim 20$ at small budgets, but decreased and remained approximately constant at $\sim 7$ for large budgets (Figure \ref{fig:image_classification}B).
For \emph{mini}-ImageNet, the optimal number of data points per class was $\sim 5$ at very small budget and then increased and remain approximately constant at $\sim 10$ (Figure \ref{fig:image_classification}D). 
The performance curves in Figure \ref{fig:image_classification}A,C tend to flatten at higher budgets, but the optimum does not change significantly.
Overall, the empirical study of both datasets confirms our prediction that the optimal number of data points per task is constant at large budgets.

\begin{figure*}[ht]
    \centering
    \includegraphics[width = 0.9\textwidth]{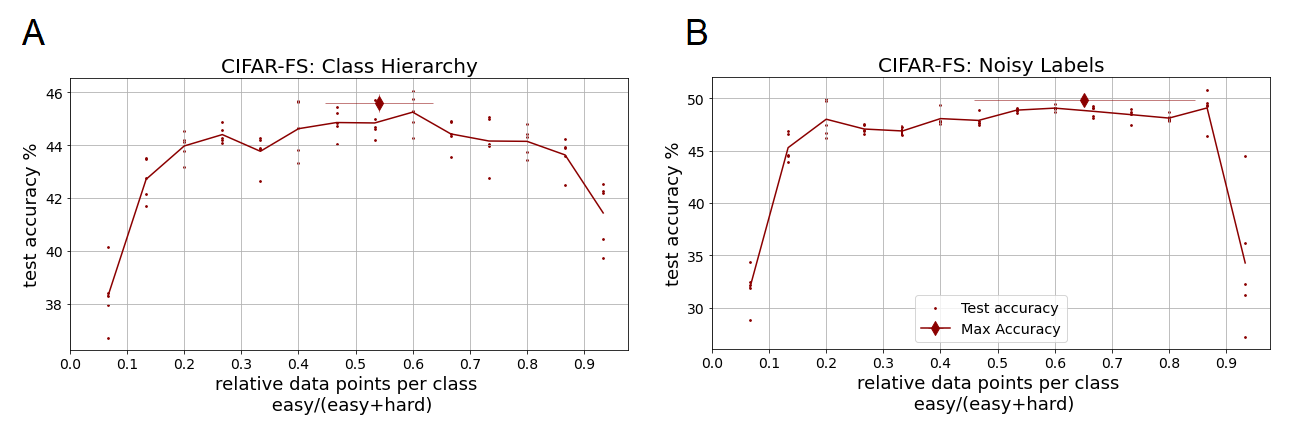}
    \caption{\textbf{When training on a mixture of easy and hard tasks, it is better to allocate more data to easy tasks}. Few shot image classification on CIFAR-FS. \textbf{A} Tasks are made harder by drawing classes within a hierarchy; \textbf{B} Tasks are made harder by adding label noise. Both plots show test accuracy versus the relative amount of easy vs hard data points per class. Each plot shows an estimate of the point of maximum accuracy, with error bars showing standard deviation. In both cases, a slightly higher performance is obtained by allocating more data to easy tasks than to hard ones.}
    \label{fig:easyhard}
\end{figure*}

\section{Easy vs hard tasks}
\label{sec:easyvshard}

In this section we consider the case of non-homogeneous tasks.
We distinguish between two sets of tasks, easy and hard.
We use two independent definitions of hard tasks, one affects the input and the other affects the output (label) of a dataset.
We apply this definition in a similar way to both mixed linear regression and few-shot image classification.

For the problem of mixed linear regression, we define \emph{task difficulty} in terms of the hyperparameters $\sigma$ and $\lambda$. 
A task is harder if it has a larger $\sigma$ (at equal $\lambda$) or smaller $\lambda$ (at equal $\sigma$).
The case of larger $\sigma$ is intuitive, a task is harder to learn if labels are more corrupted by noise.
In the case of smaller $\lambda$, the smaller input range makes it harder to solve the regression problem in presence of noise.

In few-shot image classification, the first method to make a task harder is to introduce label noise \cite{song_learning_2021}: each input image has $20\%$ probability of having its label swapped with another random class.
The second method is similar to \cite{collins_why_2020}: we take advantage of the hierarchical tree of the CIFAR-100 dataset and we constraint each task to draw classes from one of three superclasses: 1) animals, 2) vegetations, 3) object and scenes.
Therefore, we assume that each task has a smaller variability of its input, not in terms of pixels color or intensity, but in terms of semantic relations.
Intuitively, it is harder to distinguish inputs when they are more similar to each other.
We refer to the two different definitions as, respectively, \emph{noisy labels} and \emph{class hierarchy}.

\vspace{-1mm}

\subsection{Separate training}
\label{sec:easyhardonly}

Before studying the training of a mixture of easy and hard tasks, we ask what is the optimal uniform allocation when the two types of tasks are trained separately.
In mixed linear regression, the expression for the optimum of the uniform allocation $n^\star$ is given by Eq.(\ref{optimalN}), but is hard to evaluate how it depends on $\sigma$ and $\lambda$.
Therefore we computed an approximation that holds for small $\alpha'$ (see equation (\ref{nstarapprox}) in the appendix):
\begin{align} \label{Nstar}
n^\star= \left[2\left(1+\frac{\sigma'^2}{\nu^2}\right)\right]^{\frac{1}{3}}\alpha'^{\frac{4}{3}}p+O\left(\alpha'^{\frac{5}{3}}\right)
\end{align}
where $\alpha'=\lambda^2\alpha$ and $\sigma'=\sigma/\lambda$.
The optimum increases with $\sigma$, suggesting that harder tasks require more data (and less tasks) at fixed budget.
For $\lambda$, there are two opposing forces: 1) On one hand a smaller $\lambda$ is equivalent to amplifying output noise $\sigma'$ and increasing the optimum $n^\star$; 2) On the other hand, $\lambda$ rescales the learning rate $\alpha'$ with the opposite and stronger effect that a smaller $\lambda$ decreases the optimum.

Figure \ref{fig:easyhardonly} shows that introducing task difficulty in few-shot image classification on CIFAR-FS increases the optimum for both methods (A: class hierarchy; B: noisy labels, see section \ref{sec:easyhardapp} \emph{'Easy and hard tasks creation in the image classification experiments'} for details).
As expected, performance is lower for hard tasks in both cases (note that we train and test on the same set of tasks, either only easy or only hard).
While it is intuitive that hard tasks require more data to learn, we emphasize that, for a fixed budget, this comes at the expense of a smaller number of tasks.

\vspace{-1mm}

\subsection{Joint training}
\label{sec:easyhard}

We now turn to the problem of training on a mixture of easy and hard tasks.
In addition to a fixed budget, we further assume an equal number of easy and hard tasks, and a constant sum of easy and hard data points per task.
Therefore, the only hyperparameter of interest is the relative number of data points per task for easy vs hard.
Note that we use a mixture of easy and hard tasks also for testing, but we always use an equal number of easy and hard data points and tasks in that case (see section \emph{'Easy and hard tasks creation in the image classification experiments'} \ref{sec:easyhardapp} for details).

After the results of section \ref{sec:easyhardonly}, we expect better results when allocating more data to hard tasks.
Surprisingly we find that the opposite is true.
Figure \ref{fig:easyhard} shows that a slightly higher performance is obtained when allocating more data to easy tasks, in few-shot image classification on CIFAR-FS for both methods (Panel A: class hierarchy; Panel B: noisy labels).
Intuitively, easy tasks are easier to learn than hard tasks.
Therefore, it may be that if training on easy tasks transfers to better performance on hard tasks, then it is better to allocate more data to easy tasks.

\section{Discussion}

In this paper we analysed the problem of optimal data allocation in meta-learning when the budget of labelled examples is limited.
When tasks are homogeneous, we showed that uniform data allocation across tasks is optimal (under the assumptions of Theorem \ref{unifThm}).
We further studied whether one should use less tasks with more data or more tasks and less data.
For mixed linear regression, we found a unique solution for the optimum at large budgets. 
We confirmed this finding empirically on few-shot image classification (an example of nonlinear regression is also included in the appendix).

In the case of non-homogeneous tasks, with a mixture of easy and hard tasks, we showed how to optimally allocate data between the two types of tasks.
In particular, we found that it is better to allocate more data to easy tasks.
This result echoes findings in experimental neuroscience, where it was found that human visual skills indeed transfer better from easy tasks than from hard ones \cite{ahissar_task_1997}.
Our findings provide a guideline for collecting meta-learning data in a way that achieves the best performance under a fixed budget.
We do not expect our study to have a negative societal impact, at least not in a direct way. 

Overall, our study exemplifies the importance of optimal data allocation in meta-learning and gives a series of empirical and theoretical insights on the relation between model performance and data allocation for MAML. 
While the behaviour of other meta-learners need not be the same, we surmise that the problem of training models close to optimal allocation is important, and leave much space for empirical study in a variety of contexts, as well as for the development of a more general theoretical framework.
For example, we have only scratched the surface of the problem of non-uniform allocation, which requires much further study.
\bibliography{mylibrary}

\clearpage

\label{appendix}

\section{Details of experiments}

\subsection{Mixed linear regression experiments}
\label{sec:app_linear}

For the distributions of Eqs.(\ref{generative_task}),(\ref{generative_data}), we used the following values of parameters: $\sigma=0.2$, $\nu=0.2$, $p=128$, $\boldsymbol\theta_0=(0.05,\ldots,0.05)$.
During meta-training, we run experiments for several pairs of values of $(N,b)$, as shown in Figure \ref{fig:linreg}.
As explained in the main text, the number of training and validation data points was equal, $n_t=n_v=n$, and the total number of data points per task was defined as $N=n_t+n_v=2n$.
We used an inner loop learning rate of $\alpha=0.3$ during training.
We computed the exact minimum $\boldsymbol\omega^\star$ by using standard linear algebra, since the loss in Eq.(\ref{meta-loss-empirical}) is convex and quadratic.

During meta-testing, we estimated the test loss by averaging over $100$ tasks, for each task we used $20$ training and $50$ validation data points and we used a inner loop learning rate of $0.3$.
We repeated each experiment $100$ times, with different generated data samples, for each pair of values of $(N,b)$.
As explained in the main text, we used sampling to determine the optimum of the test loss: for each value of the budget $b$, we sampled $1000$ curves for the test loss, by sampling one out of the $100$ repetitions for each value of the number of data points per task $N$.
For each one of the $1000$ curves, we determined the empirical value of the minimum $N^\star$ of the test loss, and we computed the mean and standard deviation of $N^\star$ across those $1000$ values (shown in Figure \ref{fig:linreg}B,E).

In addition to the empirical experiments on mixed linear regression, that are conducted by sampling the generative model of Eqs.(\ref{generative_task}),(\ref{generative_data}), we used the analytical formulas for the average test loss and the optimal data allocation computed in section \emph{'Computation of the average test loss'}\ref{sec:loss}
and  \emph{'Optimal uniform allocation in mixed linear regression'}\ref{sec:optuniall}, respectively.

\subsection{Image classification experiments} 
\label{appendix:image}
For the CIFAR-FS and \emph{mini}-ImageNet we apply MAML on a base learner given by a convolutional neural network (CNN) with architecture as described in \cite{finn_model-agnostic_2017}. Small modifications were made to run on the CIFAR-FS data. We use a network with 4 convolutional blocks. Each block consisted of a sequence of 2D convolutions with kernel size 3, stride 1, same padding, batch normalization, a ReLU nonlinearity, and MaxPooling to half size with the kernel size and the stride of 2. The predictor head is a softmax layer applied to the flattened resulting features. We used 64 filters per layer for CIFAR-FS and 32 for \emph{mini}-ImageNet, in order to reduce overfitting, as in \cite{finn_model-agnostic_2017}. CIFAR-FS and \emph{mini}-ImageNet image sizes were 32 $\times$ 32 and 84 $\times$ 84, respectively.

For each budget limited run of MAML, data was presampled and arranged into the required number of tasks and data points per class. Whether from the training, testing or validation meta-datasets, a task was sampled by selecting 5 random classes with replacement from the available pool. Thus, it is possible for independent samples of tasks to occasionally contain the same class of images.

For the grid search algorithm, independent runs were performed by independently sampling the initial conditions of the algorithm and also independently sampling the meta-training, meta-validation and meta-test datasets. We performed 5 independent runs this way for each point on the grid. For the  meta-test datasets, 1000 tasks were sampled to minimize variance in the estimates. Each task consists of 5 randomly sampled classes (5-way), 5 data points (5-shot) per class for model adaptation and 5 data points per class for model evaluation.


During meta-training, the train-test split for each meta-training task dataset was 0.5. We ran the Adam Algorithm in the meta-update of the MAML parameter $\omega$ in the outer loop with the initial learning rate of 0.001. We annealed this learning rate on a plateau of the validation loss for at least 250 meta-updates. This is done four times with the fifth incurring termination of training.
For the adaptation step we ran 5 steps of gradient descent during meta-training and 1 step during meta-testing. Our preliminary experimental results demonstrate that using 5 adaptation steps for meta-training gives more stable performance than using only 1 adaptation step although no performance improvement can be obtained by using more adaptation steps. 1 adaptation step was used during training for \emph{mini}-Imagenet experiments to accommodate the larger memory requirements associated with this dataset. The inner loop learning (i.e. adaptation step size) is set to 0.01 in all experiments. 

The mini-batch gradient descent is used during meta-training for efficient GPU memory use. Each mini-batch consists of 25 tasks for the experiments with $m\ge 25$ and otherwise $m$ tasks, and each task consists of 50 data points for experiments with $N\ge 50$ and otherwise N data points. We performed preliminary experiments and no significant difference in terms of performance was observed when increasing the batch size further.

All experiments ran on a single Nvidia 2080 GPU with an average runtime of 1.5 hours per run of the CIFAR-FS dataset and 5 hours for miniImageNet. These times depend on the exact hyperparameters such as the allocation and budget. Overall, grid search experiments were distributed in parallel over 15 - 45 GPUs.

\subsection{Easy and hard tasks creation in the image classification experiments}
\label{sec:easyhardapp}
In the \emph{'Easy vs hard tasks'} section \ref{sec:easyvshard} we introduce hard tasks to test the influence of task difficulty and study the optimal allocation when training only in easy tasks, only in hard tasks a mixture of hard and easy tasks. 
The experimental set-up follows the description in the appendix. \ref{appendix:image}.
We define easy task as the standard tasks of CIFAR-FS, while we use two definitions of hard task:
\begin{itemize}
  \item \emph{Class hierarchy}. Task difficulty is represented by the similarity in the input images.
  The CIFAR-FS dataset can be described by a 4-level hierarchy. 
  At the top level, there are three high-level categories: \emph{animals}, \emph{vegetations}, and \emph{objects and scenes} (see Table \ref{table:hierarchy}).   
  In a hard task all classes belong to the same high-level category. 
 \item \emph{Noisy labels} Task difficulty is represented by label noise \cite{song_learning_2021}. 
 In hard tasks, each image has a $20\%$ probability of being mislabeled, by randomly drawing a label from another class with a uniform distribution.
\end{itemize}
The train, test and validation split in hard and easy tasks is the original CIFAR-FS split.
In the experiments with only hard tasks, the procedure is exactly the same as described in section \ref{appendix:image} but only employing hard tasks for both training and testing.
In joint training with easy and hard tasks, we keep the budget constant at $50,000$ data points. 
The number of easy tasks is also kept constant, at $333$, and equal to the number of hard tasks.
The sum of easy and hard data points per class is kept constant and equal to $30$ data points per class.
The balance of easy and hard tasks varies between $2$ and $28$ data points per class.

\begin{table*}[ht]
  \caption{CIFAR-FS, 4-level Hierarchy}
  \label{table:hierarchy}
  \centering
  \begin{tabular}{llll}
    \toprule
    3 Class  & 10 Class & 20 Class & 100 Class\\
    \midrule
  animals &large animals  & reptiles & crocodile, dinosaur, lizard, snake, turtle\\
    & &large carnivores & bear, leopard, lion, tiger, wolf\\
    & &large omnivores and herbivores &camel, cattle, chimpanzee, elephant, kangaroo\\
    \cmidrule(l{10em}r{0em}){1-4}
    &medium animals &aquatic mammals & beaver, dolphin, otter, seal, whale\\
    & &medium-sized mammals& fox, porcupine, possum, raccoon, skunk\\
    \cmidrule(l{10em}r{0em}){1-4}
    &small animals&small mammals& hamster, mouse, rabbit, shrew, squirrel\\
    &&fish& aquarium fish, flatfish, ray, shark, trout\\
    \cmidrule(l{10em}r{0em}){1-4}
    & invertebrates & insects & bee, beetle, butterfly, caterpillar, cockroach\\
    &&non-insect& crab, lobster, snail, spider, worm\\
    \cmidrule(l{10em}r{0em}){1-4}
    &people &people & baby, boy, girl, man, woman\\
    \midrule
    vegetations &vegetations& flowers&orchids, poppies, roses, sunflowers, tulips\\
    &&fruit and vegetables& apples, mushrooms, oranges, pears, peppers\\
    &&trees& maple, oak, palm, pine, willow\\
    \midrule
    objects and scenes&household objects & food containers &bottles, bowls, cans, cups, plates\\
    &&household electrical devices& clock, keyboard, lamp, telephone, television\\
    &&household furniture &bed, chair, couch, table, wardrobe\\
    \cmidrule(l{10em}r{0em}){1-4}
    &construction& large man-made outdoor things& bridge, castle, house, road, skyscraper\\
    \cmidrule(l{10em}r{0em}){1-4}
    &natural scenes& large natural outdoor scenes &cloud, forest, mountain, plain, sea\\
    \cmidrule(l{10em}r{0em}){1-4}
    &vehicles&vehicles 1 &bicycle, bus, motorcycle, pickup truck, train\\
    &&vehicles 2& lawn-mower, rocket, streetcar, tank, tractor\\
    \bottomrule
  \end{tabular}
\end{table*}

\section{Nonlinear regression}
\label{sec:sinusoid}
We investigated non-linear regression on a 1-dimensional sinusoid wave dataset. 
The loss is again the mean squared error:
\begin{align}
\mathcal{L}\left(\theta; x,y\right)=\frac{1}{2}\left(y-f_\theta(x)\right)^2
\end{align}
where $f_\theta(x)$ is the output of a neural network with parameters $\theta$ and input $x$.
We use a simple Multi-Layer Perceptron, following the architecture used in MAML \cite{finn_model-agnostic_2017}, see below for details on this experiment.

The task distribution is a joint over 2 parameters $\tau = (A,\phi)$, where both distributions are uniform within their range, given by
\begin{align}
&A\sim\mathcal{U}\left(0.1,5\right)
&\phi\sim\mathcal{U}\left(0,\pi\right)
\end{align}
The distribution of data for a given task $\tau = (A,\phi)$ is given by
\begin{align}
&x \sim \mathcal{U}\left(-5,5\right)
&y = A \cdot \sin(x+\phi) 
\end{align}
Each data point, of either training or validation sets, is independently drawn from this distribution.
We don't add any label noise, therefore the minimum achievable value of the test loss is zero.

Figure \ref{fig:sinusoid} shows the data allocation results for the sinusoid regression, the format is the same as in Figure \ref{fig:linreg}.
The curves for the test loss are qualitatively similar to one of the two cases analysed for linear regression (Figure \ref{fig:linreg}A,B,C).
Again, performance generally increases with budget and test loss curves tend to have a unique minimum.
Therefore, different data allocations for a given budget can result in significantly different performance and there exists an optimal allocation for a given budget. 
Figure \ref{fig:sinusoid}B shows that the optimum $N^\star$ increases from a small value at small budgets to an approximately constant value for large budgets, of about $150$ data points per task.
Our theoretical results of a constant optimum at large budgets, obtained in section \ref{sec:uniform} for mixed linear regression, seems to be confirmed in the case of nonlinear regression.

The neural network used in all sinusoid experiments was a MLP with 2 hidden layers of 40 nodes each with ReLU nonlinearities. 
During meta-training we used full batch gradient descent. The initial learning rate for performing gradient updates in the outer loop of the MAML algorithm was 0.001, while the learning rate for the adaptation procedure (inner loop of MAML) was 0.01.
In all experiments we used a learning rate annealing schedule upon reaching a plateau in the training loss. This was done three times with the fourth incurring termination of training. Additionally, we only started the learning rate scheduler after a predetermined number of meta-iterations which depended on the budget.

Unlike the experimental settings in other meta-learning papers, we do not use a fixed number of shots for meta-training since we focus on the investigation of different numbers of data points per task. 
The train-test split for each meta-training task dataset was 0.5. As a result, the minimum number of data points per task was 2 (i.e. 1 for training and the other for validation). The maximum number of data points per task was equal to the budget $b$ in which case there was only one task (m=1). The training was done using full batches of data and the Adam optimizer (\cite{kingma_adam:_2014}).

During meta-testing, we randomly sampled 1000 tasks and 500 data points per task for adaptation, and 500 data points per task for computing the test loss. 
We used one step adaptation during meta-training and five steps during meta-testing.

For the grid search, we performed up to 10 independent runs for each point on the grid with randomly sampled meta-training data and randomly initialised model weights. To obtain the statistics of the optimal $N^\star$ for each budget in the grid search experiments, we employed the following strategy. We randomly sampled one run for each point on the grid and derived the optimal $N^\star$ from this sample and repeated the sampling for 1000 times. The mean and standard deviation of $N^\star$ are computed and presented in Figure \ref{fig:sinusoid}B.


\begin{figure}[ht]
    \centering
    \includegraphics[width = 0.5\textwidth]{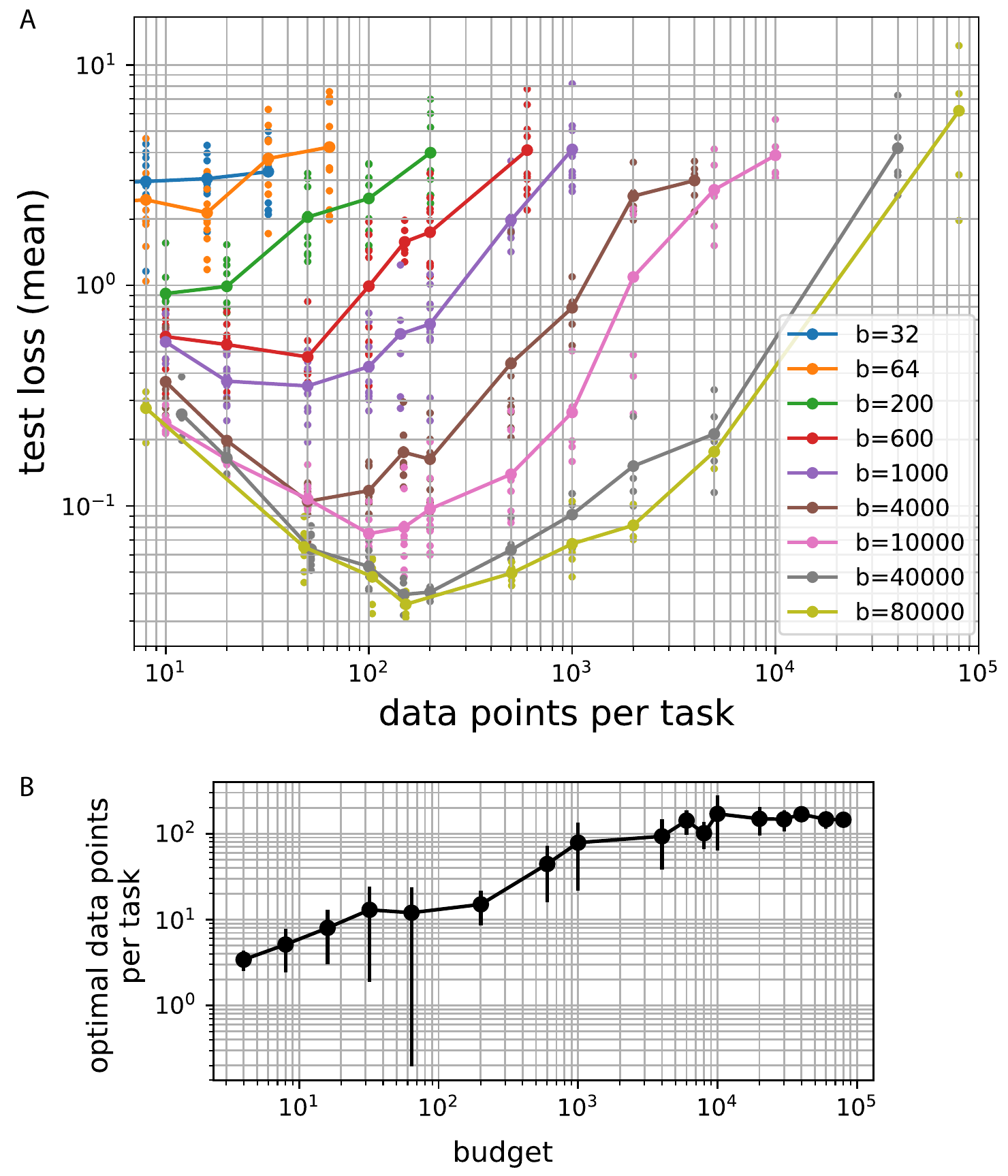}
    \caption{\textbf{Nonlinear regression.} Format is the same as of Figure \ref{fig:linreg}. Results are qualitatively similar to Figure \ref{fig:linreg}A,B with the optimal number of data points per task increasing and then converging to a constant number ($\sim 150$) at large budgets (panel B).}
    \label{fig:sinusoid}
\end{figure}

\section{Computation of the average test loss}
\label{sec:loss}

This section largely follows the results of \cite{bernacchia_meta-learning_2021}, but here the calculations are generalized to the case of heterogeneous tasks and parametric input variability. 

\subsection{Definition of the loss}

We consider the problem of mixed linear regression $\mathbf{y}=X\mathbf{w}+\mathbf{z}$ with squared loss, where $X$ is a $n\times p$ matrix of input data, each row is one of $n$ data vectors of dimension $p$, $\mathbf{z}$ is a $n\times 1$ noise vector, $\mathbf{w}$ is a $p\times 1$ vector of generating parameters and $\mathbf{y}$ is a $n\times 1$ output vector.
Data is collected for $m$ tasks, each with a different value of the parameters $\mathbf{w}$ and a different realization of the input $X$ and noise $\mathbf{z}$.
We denote by $\mathbf{w}^{(i)}$ the parameters for task $i$, for $i=1,\ldots,m$.
For a given task $i$, we denote by $X_i^t, X_i^v$ the input data for, respectively, the training and validation sets, by $\mathbf{z}_i^t, \mathbf{z}_i^v$ the corresponding noise vectors and by $\mathbf{y}_i^t, \mathbf{y}_i^v$ the output vectors.
We denote by $n_i^t$, $n_i^v$ the data sample size for task $i$, respectively for training and validations sets.

For a given task $i$, the training output is equal to 
\begin{equation}
\mathbf{y}_i^t=X_i^t\mathbf{w}^{(i)}+\mathbf{z}_i^t
\end{equation}
Similarly, the validation output is equal to
\begin{equation}
\mathbf{y}_i^v=X_i^v\mathbf{w}^{(i)}+\mathbf{z}_i^v.
\end{equation}

We consider MAML as a model for meta-learning (Finn et al 2017).
The meta-training loss is equal to
\begin{equation}
\label{lossmeta}
\mathcal{L}^{meta}=\frac{1}{m}\sum_{i=1}^m \frac{1}{2n_i^v}\left|\mathbf{y}_i^v-X_i^v\boldsymbol\theta_i(\boldsymbol\omega)\right|^2
\end{equation}
where vertical brackets denote euclidean norm, and the estimated parameters $\boldsymbol\theta_i(\boldsymbol\omega)$ are equal to the one-step gradient update on the single-task training loss $\mathcal{L}^{(i)}= |\mathbf{y}_i^t-X_i^t\boldsymbol\theta_i|^2/2n_i^t$, with initial condition given by the meta-parameter $\boldsymbol\omega$.
The single gradient update is equal to
\begin{equation}
\boldsymbol\theta_i(\boldsymbol\omega)=\left(I_p-\frac{\alpha_i}{n_i^t}{X_i^t}^TX_i^t\right)\boldsymbol\omega+\frac{\alpha_i}{n_i^t}{X_i^t}^T\mathbf{y}_i^t
\end{equation}
where $I_p$ is the $p\times p$ identity matrix and $\alpha_i$ is the learning rate.
We seek to minimize the meta-training loss with respect to the meta-parameter $\boldsymbol\omega$, namely
\begin{equation}
\label{omegastar}
\boldsymbol\omega^\star=\mbox{arg}\min_{\boldsymbol\omega}\mathcal{L}^{meta}
\end{equation}
We evaluate the solution $\boldsymbol\omega^\star$ by calculating the meta-test loss
\begin{equation}
\label{losstest0}
\mathcal{L}^{test}= \frac{1}{2n_s}\left|\mathbf{y}^s-X^s\boldsymbol\theta^\star\right|^2
\end{equation}
Note that the test loss is calculated over test data $X^s, \mathbf{z}^s$, and test parameters $\mathbf{w'}$, namely
\begin{equation}
\label{testoutput}
\mathbf{y}^{s}=X^{s}\mathbf{w'}+\mathbf{z}^{s}
\end{equation}
Furthermore, the estimated parameters $\boldsymbol\theta^\star$ are calculated on a separate set of target data $X^{r}, \mathbf{z}^{r}$, namely
\begin{equation}
\label{thetastar}
\boldsymbol\theta^\star=\left(I_p-\frac{\alpha_r}{n_r}{X^{r}}^TX^{r}\right)\boldsymbol\omega^\star+\frac{\alpha_r}{n_r}{X^{r}}^T\mathbf{y}^{r}
\end{equation}
\begin{equation}
\mathbf{y}^{r}=X^{r}\mathbf{w'}+\mathbf{z}^{r}
\end{equation}
Note that the learning rate and sample size can be different at testing, denoted by $\alpha_r, n_r, n_s$.
We are interested in calculating the average test loss, that is the test loss of Eq.\ref{losstest0} averaged over the entire data distribution, equal to
\begin{equation}
\label{losstest}
\overline{\mathcal{L}}^{test}=   \mathop{\mathbb{E}}_{\mathbf{w}}\mathop{\mathbb{E}}_{\mathbf{z}^{t}}\mathop{\mathbb{E}}_{X^{t}}\mathop{\mathbb{E}}_{\mathbf{z}^{v}}\mathop{\mathbb{E}}_{X^{v}}\mathop{\mathbb{E}}_{\mathbf{w}'}\mathop{\mathbb{E}}_{\mathbf{z}^s}\mathop{\mathbb{E}}_{X^s}\mathop{\mathbb{E}}_{\mathbf{z}^r}\mathop{\mathbb{E}}_{X^r}\frac{1}{2n_s}\left|\mathbf{y}^s-X^s\boldsymbol\theta^\star\right|^2
\end{equation}

\subsection{Probability distributions and averaging}

We assume that all random variables are Gaussian.
In particular, we assume that the rows of the matrix $X$ are independent, and each row, denoted by $\mathbf{x}$, is distributed according to a multivariate Gaussian with zero mean and unit covariance
\begin{equation}
\mathbf{x}\sim\mathcal{N}\left(0,\lambda^2I_p\right)
\end{equation}
where $I_p$ is the $p\times p$ identity matrix.
Similarly, the noise is distributed following a multivariate Gaussian with zero mean and variance equal to $\sigma^2$, namely
\begin{equation}
\mathbf{z}\sim\mathcal{N}\left(0,\sigma^2I_n\right)
\end{equation}
Finally, the generating parameters are also distributed according to a multivariate Gaussian of variance $\nu^2/p$, namely
\begin{equation}
\mathbf{w}\sim\mathcal{N}\left(\mathbf{w}_0,\frac{\nu^2}{p}I_p\right)
\end{equation}
The generating parameter $\mathbf{w}$ is drawn once and kept fixed within a task, and drawn independently for different tasks.
The values of $\mathbf{x}$ and $\mathbf{z}$ are drawn independently in all tasks and datasets (training, validation, target, test).
In order to perform the calculations in the next section, we need the following results.

\begin{lemma} 

\label{lemma1}

Let $X$ be a Gaussian $n \times p$ random matrix with independent rows, and each row has covariance equal to $I_p$, the $p\times p$ identity matrix. Then:
    \begin{align}
    \label{1ord}
    &\mathbb{E}\left[X^TX\right]=\lambda^2nI_p\\
    \label{2ord}
    &\mathbb{E}\left[\left(X^TX\right)^2\right]=\lambda^4n\left(n+p+1\right)I_p=\lambda^4n^2\mu_2I_p\\
    \label{3ord}
    &\mathbb{E}\left[\left(X^TX\right)^3\right]=\lambda^6n\left(n^2+p^2+3np+3n+3p+4\right)I_p=\nonumber\\
    &=\lambda^6n^3\mu_3I_p\\
    \label{4ord}
    &\mathbb{E}\left[\left(X^TX\right)^4\right]=\lambda^8n\left(n^3+p^3+6n^2p+6np^2+\right.\nonumber\\
    &\left.+6n^2+6p^2+17np+21n+21p+20\right)I_p=\lambda^8n^4\mu_4I_p\\
    \label{2ordTr}
    &\mathbb{E}\left[X^TX\;\mbox{Tr}\left(X^TX\right)\right]=\lambda^4\left(n^2p+2n\right)I_p=\lambda^4pn^2\mu_{1,1}I_p\\
    \label{3ordTr}
    &\mathbb{E}\left[\left(X^TX\right)^2\mbox{Tr}\left(X^TX\right)\right]=\nonumber\\
    &=\lambda^6n\left(n^2p+np^2+np+4n+4p+4\right)I_p=\nonumber\\
    &=\lambda^6pn^3\mu_{2,1}I_p\\
    \label{3ordTr2}
    &\mathbb{E}\left[X^TX\mbox{Tr}\left(\left(X^TX\right)^2\right)\right]=\nonumber\\
    &=\lambda^6n\left(n^2p+np^2+np+4n+4p+4\right)I_p=\nonumber\\
    &=\lambda^6pn^3\mu_{1,2}I_p\\
    \label{4ordTr}
    &\mathbb{E}\left[\left(X^TX\right)^2\mbox{Tr}\left(\left(X^TX\right)^2\right)\right]=\nonumber\\
    &=\lambda^8n\left(n^3p+np^3+2n^2p^2+2n^2p+2np^2+\right.\nonumber\\
    &\left.+8n^2+8p^2+21np+20n+20p+20\right)I_p=\nonumber\\
    &=\lambda^8pn^4\mu_{2,2}I_p
    \end{align}
where the last equality in each of these expressions defines the variables $\mu$.
Furthermore, for any $n\times n$ symmetric matrix C and any $p\times p$ symmetric matrix $D$, independent of $X$:
    \begin{align}
        \label{2ordA}
    &\mathbb{E}\left[X^TCX\right]=\lambda^2\mbox{Tr}\left(C\right)I_p\\
    &\mathbb{E}\left[X^TXDX^TX\right]=\lambda^4n\left(n+1\right)D+\lambda^4n\mbox{Tr}\left(D\right)I_p
    \end{align}

\end{lemma}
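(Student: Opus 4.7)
I would lean on two classical tools: (i) the rotational invariance of the row distribution $\mathcal{N}(0,\lambda^2 I_p)$, which forces every expectation appearing in (\ref{1ord})--(\ref{4ordTr}) to be a scalar multiple of $I_p$, and (ii) Isserlis' (Wick's) theorem for Gaussian moments, which expresses $\mathbb{E}[\mathrm{Tr}((X^TX)^k)]$ and the mixed trace quantities as sums over pair contractions of the row vectors. An equivalent route is to invoke known Wishart moment formulas, but I prefer the direct Wick route because it makes the combinatorics explicit.

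\textbf{Step 1: reduce to $\lambda=1$.} Write $X=\lambda G$ where $G$ has i.i.d.\ standard-normal entries. Every factor $X^TX$ contributes a $\lambda^2$; counting factors in each identity shows that the claimed $\lambda$-powers follow immediately, so it suffices to prove the identities for $G$.

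\textbf{Step 2: isotropy.} Because the law of $G$ is invariant under $G\mapsto GO$ for every orthogonal $O\in O(p)$, for any polynomial $P$ we have $O^T\mathbb{E}[P(G^TG)]O=\mathbb{E}[P(G^TG)]$, so $\mathbb{E}[P(G^TG)]$ commutes with all orthogonal matrices and is therefore a multiple of $I_p$, namely $\frac{1}{p}\mathrm{Tr}(\mathbb{E}[P(G^TG)])I_p$. This reduces (\ref{1ord})--(\ref{4ord}) and their mixed-trace cousins to computing the scalar $\mathbb{E}[\mathrm{Tr}(\cdots)]$.

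\textbf{Step 3: Wick expansion.} Write $G^TG=\sum_{a=1}^n g_a g_a^T$ with i.i.d.\ $g_a\sim\mathcal{N}(0,I_p)$. Each scalar $\mathbb{E}[\mathrm{Tr}((G^TG)^k)]$ becomes a multi-index sum $\sum_{a_1,\ldots,a_k}\sum_{\mu_1,\ldots,\mu_k}\mathbb{E}\bigl[\prod_{\ell}(g_{a_\ell})_{\mu_\ell}(g_{a_\ell})_{\mu_{\ell+1}}\bigr]$, and Isserlis' theorem collapses this to a sum over pairings weighted by $\delta_{a_i a_j}\delta_{\mu_i \mu_j}$. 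Each pairing contributes either a factor $n$ (free row index sum), $p$ (free coordinate sum), or $1$ (forced coincidence). Grouping contributions by their $(n,p)$-monomial yields precisely $\mu_2=n+p+1$, $\mu_3=n^2+p^2+3np+3n+3p+4$, and $\mu_4$ as stated. The mixed-trace cases (\ref{2ordTr})--(\ref{4ordTr}) are handled by the same procedure applied to the combined polynomial $\mathrm{Tr}((G^TG)^j)\cdot(G^TG)^k$, where trace and free matrix slots produce distinct index-graph topologies that are enumerated in parallel.

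\textbf{Step 4: arbitrary $C,D$ identities.} For $\mathbb{E}[X^TCX]$, expand entrywise $(X^TCX)_{\mu\nu}=\sum_{ab}C_{ab}X_{a\mu}X_{b\nu}$, use $\mathbb{E}[X_{a\mu}X_{b\nu}]=\lambda^2\delta_{ab}\delta_{\mu\nu}$, and obtain $\lambda^2\mathrm{Tr}(C)I_p$. For $\mathbb{E}[X^TX\,D\,X^TX]$, the four copies of $X$ give three Wick pairings; organizing them by whether the two ``outer'' $X$'s contract with each other or with the ``inner'' pair yields the decomposition $\lambda^4 n(n+1)D+\lambda^4 n\,\mathrm{Tr}(D)I_p$.

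\textbf{Main obstacle.} The bottleneck is purely combinatorial: the fourth-order quantities $\mu_4$, $\mu_{2,1}$, $\mu_{1,2}$, and especially $\mu_{2,2}$ involve on the order of $105$ pairings each, and correctly assigning each pairing its $(n,p)$-weight requires careful bookkeeping of the cycle structure of the associated index graph. I would organize the enumeration by the graph of contractions (single cycle, two cycles, etc.) and cross-check the final polynomials against the standard Wishart moment formulas (e.g.\ Letac--Massam) to detect arithmetic slips; beyond this, the proof is mechanical.
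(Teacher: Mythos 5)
Your proposal matches the paper's own (very terse) proof: the authors likewise establish these identities by direct computation of the expectations via Isserlis' theorem, with the combinatorial bookkeeping of pairings carrying the burden for the higher-order and mixed-trace moments. Your added structure (scaling out $\lambda$, the isotropy reduction to scalar traces, and cross-checking against Wishart moment formulas) is a sound and more explicit organization of the same argument.
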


\begin{proof}

The Lemma follows by direct computations of the above expectations, using Isserlis' theorem. Particularly, for higher order exponents, combinatorics plays a crucial role in counting products of different Gaussian variables in an effective way.

\end{proof}

\begin{lemma} 

\label{lemma2}

Let $X_i^v$, $X_i^t$ be Gaussian random matrices, of size respectively $n_v \times p$ and $n_t \times p$, with independent rows, and each row has covariance equal to $I_p$, the $p\times p$ identity matrix. Let $p(\xi)$ and $n_t(\xi)$ be any function of order $O(\xi)$ as $\xi\rightarrow\infty$. Then:
    \begin{align}
    &X_i^v{X_i^v}^T=\lambda^2p\;I_{n_v}+O\left(\xi^{1/2}\right)\\
    &X_i^v{X_i^t}^TX_i^t{X_i^v}^T=\lambda^4pn_t\;I_{n_v}+O\left(\xi^{3/2}\right)\\
    &X_i^v{X_i^t}^TX_i^t{X_i^t}^TX_i^t{X_i^v}^T=\lambda^6pn_t(n_t+p+1)I_{n_v}+O\left(\xi^{5/2}\right)
    \end{align}
Note that the order $O\left(\xi\right)$ applies to all elements of the matrix in each expression.
For $i\neq j$
    \begin{align}
    &X_i^v{X^{v(j)}}^T=O\left(\xi^{1/2}\right)\\
    &X_i^v{X_i^t}^TX_i^t{X^{v(j)}}^T=O\left(\xi^{3/2}\right)\\
    &X_i^v{X_i^t}^TX_i^t{X^{t(j)}}^TX^{t(j)}{X^{v(j)}}^T=O\left(\xi^{5/2}\right)
    \end{align}
Furthermore, for any positive real number $\delta$ and for any $p\times p$ symmetric matrix $D$ independent of X, where Tr$(D)$ and Tr$(D^2)$ are both of order $O(\xi^\delta)$
    \begin{align}
    &X_i^vD{X_i^v}^T=\lambda^2\mbox{Tr}\left(D\right)I_{n_v}+O\left(\xi^{\delta/2}\right)\\
    &X_i^v{X_i^t}^TX_i^tD{X_i^v}^T=\lambda^4\mbox{Tr}\left(D\right)n_tI_{n_v}+O\left(\xi^{1+\delta/2}\right)\\
    &X_i^v{X_i^t}^TX_i^tD{X_i^t}^TX_i^t{X_i^v}^T=\nonumber\\
    &\lambda^6\mbox{Tr}\left(D\right)n_t(n_t+p+1)I_{n_v}+O\left(\xi^{2+\delta/2}\right)\\
    &X_i^vD{X^{v(j)}}^T=O\left(\xi^{\delta/2}\right)\\
    &X_i^v{X_i^t}^TX_i^tD{X^{v(j)}}^T=O\left(\xi^{1+\delta/2}\right)\\
    &X_i^v{X_i^t}^TX_i^tD{X^{t(j)}}^TX^{t(j)}{X^{v(j)}}^T=O\left(\xi^{2+\delta/2}\right)
    \end{align}

\end{lemma}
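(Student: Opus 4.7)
My plan is to establish each identity by combining an exact mean computation (via Isserlis' theorem, i.e., Lemma \ref{lemma1}) with a variance estimate for each matrix entry, so that the claimed error order is the square root of that variance. The core building block is this: if $X$ is an $n\times p$ Gaussian matrix with i.i.d.\ rows of covariance $\lambda^2 I_p$ and $A$ is a $p\times p$ symmetric matrix independent of $X$, then a direct Isserlis computation yields $\mathbb{E}[XAX^T]=\lambda^2\Tr(A)\,I_n$ and each entry of $XAX^T-\lambda^2\Tr(A)I_n$ has variance bounded by $C\lambda^4\Tr(A^2)$. Applied iteratively by conditioning on the inner factors, this single fact drives every estimate in the lemma.

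For the ``same-task'' identities I would peel off the outer $X_i^v$ and condition on the inner matrices. The first, with $A=I_p$, has $\Tr(A^2)=p=O(\xi)$ and so gives fluctuations of order $\xi^{1/2}$. For $X_i^v(X_i^t)^TX_i^t(X_i^v)^T$ I condition on $X_i^t$ and set $A=(X_i^t)^TX_i^t$; by Lemma \ref{lemma1} the traces $\Tr(A)$ and $\Tr(A^2)$ concentrate at orders $\xi^2$ and $\xi^3$ respectively, producing the stated $O(\xi^{3/2})$ error (the fluctuation of $\Tr(A)$ around its mean $\lambda^2pn_t$ lives at the same order and is absorbed). The three-factor case $X_i^v(X_i^t)^TX_i^t(X_i^t)^TX_i^t(X_i^v)^T$ is analogous with $A=((X_i^t)^TX_i^t)^2$: its mean trace, computed via the $\mu_2$ identity of Lemma \ref{lemma1}, gives the leading coefficient $\lambda^6pn_t(n_t+p+1)$, while a fourth-moment computation using $\mu_4$ and $\mu_{2,2}$ bounds $\Tr(A^2)$ at order $\xi^5$, yielding the $O(\xi^{5/2})$ remainder.

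For the cross-task bounds $i\neq j$, independence of $X_i^v$ from every $X^{(j)}$-block makes the conditional mean vanish, so it suffices to bound the second moment of each entry; conditioning on the middle factor $M$ built from $X_i^t$ and the $j$-block matrices, and using $\mathrm{Var}((X_i^vM)_{kl}\mid M)=\lambda^2\|M_{\cdot l}\|^2$, reduces each case to one further Gaussian quadratic moment computation, which gives the claimed $\xi^{1/2},\xi^{3/2},\xi^{5/2}$ orders. The three $D$-dependent identities follow by substituting $D$ (or $D$ sandwiched between $X_i^t$ factors) into the role of $A$ in the core bound: $\Tr(A^2)$ decomposes, via Lemma \ref{lemma1}, into terms controlled by $\Tr(D^2)$ and $\Tr(D)^2$, both of which are $O(\xi^\delta)$ by hypothesis, yielding the claimed $O(\xi^{k+\delta/2})$ fluctuation. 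The main obstacle, in my view, is the book-keeping of orders in the longest products, where one must verify that the fluctuation of the random trace $\Tr(((X_i^t)^TX_i^t)^2)$ around its mean is itself of the correct subleading order; this is the one step where the full strength of Lemma \ref{lemma1} (the mixed moments $\mu_{2,1},\mu_{1,2},\mu_{2,2}$) is genuinely needed, and everything else reduces to iterating the standard $XAX^T$ concentration.
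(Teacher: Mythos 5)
Your proposal is correct and follows essentially the same route as the paper, whose proof consists precisely of computing the expectation and the (entrywise) variance of each term and reading off the fluctuation order as the square root of the variance, with the moments of Lemma \ref{lemma1} supplying the needed trace estimates. Your conditioning-on-inner-factors bookkeeping is just a more explicit rendering of that same computation (only note that $\Tr(D)^2=O(\xi^{2\delta})$, not $O(\xi^{\delta})$; the stated orders still follow since $\Tr(D)^2\leq p\,\Tr(D^2)$).
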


\begin{proof}
The Lemma follows by direct computations of the expectations and variances of each term.

\end{proof}

\begin{lemma} 

\label{lemma3}

Let $X^{v}$, $X^{t}$ be Gaussian random matrices, of size respectively $n_v \times p$ and $n_t \times p$, with independent rows, and each row has covariance equal to $I_p$, the $p\times p$ identity matrix. Let $n_v(\xi)$ and $n_t(\xi)$ be any function of order $O(\xi)$ for $\xi\rightarrow\infty$. Then:
    \begin{align}
    &{X^{v}}^TX^{v}=\lambda^2n_v\;I_{p}+O\left(\xi^{1/2}\right)\\
    &{X^{t}}^TX^{t}{X^{v}}^TX^{v}=\lambda^4n_tn_v\;I_{p}+O\left(\xi^{3/2}\right)\\
    &{X^{t}}^TX^{t}{X^{v}}^TX^{v}{X^{t}}^TX^{t}= \nonumber\\
&=\lambda^6n_vn_t(n_t+p+1)I_{p}+O\left(\xi^{5/2}\right)
    \end{align}
    Note that the order $O\left(\xi\right)$ applies to all elements of the matrix in each expression.

\end{lemma}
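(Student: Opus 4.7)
The plan is to prove each of the three identities by first computing the exact expectation via Lemma~\ref{lemma1} and the (implicit) independence of $X^t$ and $X^v$, and then bounding the entrywise deviation from that expectation by a variance calculation (Chebyshev, combined with Isserlis' theorem for Gaussian moments). This mirrors the variance-based technique of Lemma~\ref{lemma2}, here applied to the $p\times p$ Gram-type products rather than the $n_v\times n_v$ ones.

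For the first identity, the mean is $\lambda^2 n_v I_p$ by Eq.~(\ref{1ord}). Each entry $({X^v}^T X^v)_{ab}=\sum_{k=1}^{n_v}X^v_{ka}X^v_{kb}$ is a sum of $n_v$ independent random variables of bounded variance, so the variance of the entry is $O(n_v)=O(\xi)$ and its fluctuation is $O(\xi^{1/2})$ uniformly in $a,b$.

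For the second identity, independence gives $\mathbb{E}[{X^t}^T X^t\,{X^v}^T X^v]=\lambda^4 n_t n_v I_p$. Writing ${X^t}^T X^t=\lambda^2 n_t I_p+A_t$ and ${X^v}^T X^v=\lambda^2 n_v I_p+A_v$, with $A_t,A_v$ independent, mean zero and of entrywise order $O(\xi^{1/2})$ by the first step, the product expands as
\[
\lambda^4 n_t n_v I_p+\lambda^2 n_t A_v+\lambda^2 n_v A_t+A_t A_v.
\]
The single-fluctuation terms are $O(\xi)\cdot O(\xi^{1/2})=O(\xi^{3/2})$ entrywise. The cross term $(A_t A_v)_{ij}=\sum_{k=1}^p(A_t)_{ik}(A_v)_{kj}$ is a sum of $p$ independent mean-zero products of variance $O(\xi^2)$, hence has entrywise standard deviation $O(\sqrt{p}\,\xi)$, which is $O(\xi^{3/2})$ under the scaling $p=O(\xi)$ consistent with the underparameterised regime $\sum_i n_i>p$ of Theorem~\ref{underThm}.

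For the third identity, I would condition on $X^t$ and use Eq.~(\ref{1ord}) to obtain $\mathbb{E}[{X^t}^T X^t\,{X^v}^T X^v\,{X^t}^T X^t\mid X^t]=\lambda^2 n_v\,({X^t}^T X^t)^2$, and then Eq.~(\ref{2ord}) to recover the leading term $\lambda^6 n_v n_t(n_t+p+1)I_p$. Splitting the product as $\lambda^2 n_v({X^t}^T X^t)^2+({X^t}^T X^t)\,A_v\,({X^t}^T X^t)$, the first piece has entrywise variance controlled by the fourth-moment identity Eq.~(\ref{4ord}), yielding fluctuation $O(\xi^{5/2})$; the second piece has zero mean, and its dominant entry is $\lambda^4 n_t^2 A_v=O(\xi^{5/2})$, with the sub-leading mixed terms $\lambda^2 n_t(A_t A_v+A_v A_t)$ and $A_t A_v A_t$ bounded by the same $\sqrt{p}$-style argument used in the second step.

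The main obstacle is controlling the mixed-fluctuation matrix products $A_tA_v$ (and $A_tA_vA_t$ in the last identity): unlike scalar concentration, a matrix product introduces an intermediate summation of length $p$, and a naive bound would inflate the variance by a factor of $p$. The plan therefore hinges on (a) the independence of $X^t$ and $X^v$, which annihilates all cross-expectations at leading order, and (b) the implicit scaling $p=O(\xi)$, so that the extra $\sqrt{p}$ factor is absorbed into the claimed $O(\xi^{k+1/2})$ error terms.
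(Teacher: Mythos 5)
Your proposal is correct and matches the paper's approach: the paper's proof of this lemma is simply ``direct computations of the expectations and variances of each term,'' which is exactly the mean-plus-Chebyshev (Isserlis-based) entrywise argument you carry out, only in more detail (including the decomposition ${X}^TX=\lambda^2 n I_p + A$ and the $\sqrt{p}$ bookkeeping for the cross terms). Your explicit observation that the stated error orders implicitly use $p=O(\xi)$ is a point the paper leaves unstated in Lemma~\ref{lemma3} (though it assumes it in Lemma~\ref{lemma2}), so if anything your write-up is slightly more careful; note only that the summands in $\sum_k (A_t)_{ik}(A_v)_{kj}$ are uncorrelated rather than independent, which is all the variance bound needs.
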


\begin{proof}
The Lemma follows by direct computations of the expectations and variances of each term.

\end{proof}

\subsection{Averaging over test data}

We calculate the average test loss as a function of the hyperparameters $n_i^t$, $n_i^v$, $n_r$, $p$, $m$, $\alpha_i$, $\alpha_r$, $\sigma_i$, $\sigma_r$, $\lambda_i$, $\lambda_r$, $\nu$, $\mathbf{w}_0$.
The subscript $i$ denotes meta-training hyperparameters for task $i$, while the subscript $r$ denotes meta-testing hyperparameters.
Using the expression in Eq.\ref{testoutput} for the test output, we rewrite the test loss in Eq.\ref{losstest} as
\begin{equation}
\overline{\mathcal{L}}^{test}=\mathop{\mathbb{E}}\frac{1}{2n_s}\left|X^{s}\left(\mathbf{w'}-\boldsymbol\theta^\star\right)+\mathbf{z}^{s}\right|^2
\end{equation}
We start by averaging this expression with respect to $X^s, \mathbf{z}^s$, noting that $\boldsymbol\theta^\star$ does not depend on test data.
We further average with respect to $\mathbf{w'}$, but note that $\boldsymbol\theta^\star$ depends on test parameters, so we average only terms that do not depend on $\boldsymbol\theta^\star$.
Using Eq.\ref{1ord}, the result is

\label{losstest2}
\begin{align}
    &
\overline{\mathcal{L}}^{test}=\frac{\sigma_r^2}{2}+\frac{\lambda_r^2}{2}\left(\nu^2+\left|\mathbf{w}_0\right|^2\right)+\nonumber\\
&+\lambda_r^2\mathbb{E}\left[\frac{\left|\boldsymbol\theta^\star\right|^2}{2}-\left(\mathbf{w}_0+\delta\mathbf{w'}\right)^T\boldsymbol\theta^\star\right]
\end{align}\\

where we define $\delta\mathbf{w'}=\mathbf{w'}-\mathbf{w}_0$.
The second term in the expectation is linear in $\boldsymbol\theta^\star$ and can be averaged over $X^r, \mathbf{z}^r$, using Eq.\ref{thetastar} and noting that  $\boldsymbol\omega^\star$ does not depend on target data.
The result is
\begin{equation}
\label{Ethetastar}
\mathop{\mathbb{E}}_{X^r}\mathop{\mathbb{E}}_{\mathbf{z}^r}\;\boldsymbol\theta^\star=(1-\lambda_r^2\alpha_r)\boldsymbol\omega^\star+\lambda_r^2\alpha_r\left(\mathbf{w}_0+\delta\mathbf{w'}\right)
\end{equation}
Using Eq.\ref{Ethetastar} we average over $\mathbf{w'}$ the second term in the expectation of Eq.\ref{losstest2} and find
\label{losstest3}
\begin{align}
&\overline{\mathcal{L}}^{test}=\frac{\sigma_r^2}{2}+\lambda_r^2\left(\frac{1}{2}-\lambda_r^2\alpha_r\right)\left(\nu^2+\left|\mathbf{w}_0\right|^2\right)-\nonumber\\
&-\lambda_r^2\left(1-\lambda_r^2\alpha_r\right)\mathbf{w}_0^T\mathbb{E}\;\boldsymbol\omega^\star+\lambda^2\mathbb{E}\frac{\left|\boldsymbol\theta^\star\right|^2}{2}
\end{align}
We average the last term of this expression over $\mathbf{z}^r, \mathbf{w'}$, using Eq.\ref{thetastar} and noting that $\boldsymbol\omega^\star$ does not depend on target data and test parameters.
The result is
\begin{align}
&\mathop{\mathbb{E}}_{\mathbf{w'}}\mathop{\mathbb{E}}_{\mathbf{z}^r}\left|\boldsymbol\theta^\star\right|^2=\left|\boldsymbol\omega^\star\right|^2+\frac{\alpha_r^2}{n_r^2}\left(\boldsymbol\omega^\star-\mathbf{w}_0\right)^T\left({X^{r}}^TX^{r}\right)^2\left(\boldsymbol\omega^\star-\mathbf{w}_0\right)-\nonumber\\
&-\frac{2\alpha_r}{n_r}{\boldsymbol\omega^\star}^T{X^{r}}^TX^{r}\nonumber\\
&\left(\boldsymbol\omega^\star-\mathbf{w}_0\right)+\frac{\alpha_r^2\sigma_r^2}{n_r^2}\mbox{Tr}\left[{X^{r}}{X^{r}}^T\right]+\frac{\alpha_r^2\nu^2}{n_r^2p}\mbox{Tr}\left[\left({X^{r}}{X^{r}}^T\right)^2\right]
\end{align}
We now average over $X^r$, again noting that $\boldsymbol\omega^\star$ does not depend on target data.
Using Eqs.\ref{1ord}, \ref{2ord}, we find
\begin{align}
&\mathop{\mathbb{E}}_{X^r}\mathop{\mathbb{E}}_{\mathbf{w'}}\mathop{\mathbb{E}}_{\mathbf{z}^r}\left|\boldsymbol\theta^\star\right|^2=\left|\boldsymbol\omega^\star\right|^2+\lambda_r^4\alpha_r^2\left(1+\frac{p+1}{n_r}\right)\nonumber\\
&\left(\nu^2+\left|\boldsymbol\omega^\star-\mathbf{w}_0\right|^2\right)-2\lambda_r^2\alpha_r{\boldsymbol\omega^\star}^T\left(\boldsymbol\omega^\star-\mathbf{w}_0\right)+\frac{\lambda_r^2\alpha_r^2\sigma_r^2p}{n_r}
\end{align}
We can now rewrite the average test loss \ref{losstest3} as
\label{losstest4}
\begin{align}
&\overline{\mathcal{L}}^{test}=\frac{\sigma_r^2}{2}\left(1+\frac{\lambda_r^4\alpha_r^2p}{n_r}\right)+\nonumber\\
&+ \frac{\lambda_r^2}{2}\left[\left(1-\lambda_r^2\alpha_r\right)^2+\lambda_r^4\alpha_r^2\frac{p+1}{n_r}\right]\left(\nu^2+\mathbb{E}\left|\boldsymbol\omega^\star-\mathbf{w}_0\right|^2\right)
\end{align}
In order to average the last term, we need an expression for $\boldsymbol\omega^\star$.
We note that the loss in Eq.\ref{lossmeta} is quadratic in $\boldsymbol\omega$, therefore the solution of Eq.\ref{omegastar} can be found using standard linear algebra.
In particular, the loss in Eq.\ref{lossmeta} can be rewritten as
\begin{equation}
\label{lossmetashort}
\mathcal{L}^{meta}=\frac{1}{2m}\left|\boldsymbol\gamma-B\boldsymbol\omega\right|^2
\end{equation}
where $\boldsymbol\gamma$ is a column vector of $\sum_{i=1}^mn_i^v$ components, and $B$ is a matrix of shape $\sum_{i=1}^mn_i^v\times p$.
The vector $\boldsymbol\gamma$ is a stack of $m$ vectors
\begin{equation}
{\tiny
\boldsymbol\gamma=\left(
\begin{matrix}\frac{1}{\sqrt{n_1^v}}\left[X_1^v\left(I_p-\frac{\alpha_1}{n_1^t}{X_1^t}^TX_1^t\right)\mathbf{w}_1-\frac{\alpha_1}{n_1^t}X_1^v{X_1^t}^T\mathbf{z}_1^t+\mathbf{z}_1^v\right]\\ 
\vdots\\
\frac{1}{\sqrt{n_m^v}}\left[X_m^v\left(I_p-\frac{\alpha_m}{n_m^t}{X_m^t}^TX_m^t\right)\mathbf{w}_m-\frac{\alpha_m}{n_m^t}X_m^v{X_m^t}^T\mathbf{z}_m^t+\mathbf{z}_m^v\right]
\end{matrix}\right)
}
\end{equation}
Similarly, the matrix $B$ is a stack of $m$ matrices
\begin{equation}
\label{Bmatrix}
B=\left(
\begin{matrix}\frac{1}{\sqrt{n_1^v}}\left[X_1^v\left(I_p-\frac{\alpha_1}{n_1^t}{X_1^t}^TX_1^t\right)\right]\\ 
\vdots\\
\frac{1}{\sqrt{n_m^v}}\left[X_m^v\left(I_p-\frac{\alpha_m}{n_m^t}{X_m^t}^TX_m^t\right)\right]
\end{matrix}\right)
\end{equation}
We denote by $I_p$ the $p\times p$ identity matrix.
The expression for $\boldsymbol\omega$ that minimizes Eq.\ref{lossmetashort} depends on whether the problem is overparameterized ($p>\sum_{i=1}^mn_i^v$) or underparameterized ($p<\sum_{i=1}^mn_i^v$), therefore we distinguish these two cases in the following sections.

\subsection{Averaging over training data: overparameterized case}
\label{sec:overp}

In the overparameterized case ($p>\sum_{i=1}^mn_i^v$), under the assumption that the inverse of $BB^T$ exists, the value of $\boldsymbol\omega$ that minimizes Eq.\ref{lossmetashort} is equal to
\begin{equation}
\label{omegastar_op}
\boldsymbol\omega^\star=B^T\left(BB^T\right)^{-1}\boldsymbol\gamma+\left[I_p-B^T\left(BB^T\right)^{-1}B\right]\boldsymbol\omega_0
\end{equation}
The vector $\boldsymbol\omega_0$ is interpreted as the initial condition of the parameter optimization of the outer loop, when optimized by gradient descent.
Note that the matrix $B$ does not depend on $\mathbf{w}, \mathbf{z}^t, \mathbf{z}^v$, and
$\mathop{\mathbb{E}}_{\mathbf{w}}\mathop{\mathbb{E}}_{\mathbf{z}^t}\mathop{\mathbb{E}}_{\mathbf{z}^v}\;\boldsymbol\gamma=B\mathbf{w}_0$.
We denote by $\delta\boldsymbol\gamma$ the deviation from the average, and we have
\begin{align}
&\boldsymbol\omega^\star-\mathbf{w}_0=B^T\left(BB^T\right)^{-1}\delta\boldsymbol\gamma+\nonumber\\
&+\left[I_p-B^T\left(BB^T\right)^{-1}B\right]\left(\boldsymbol\omega_0-\mathbf{w}_0\right)
\end{align}
We square this expression and average over $\mathbf{w}, \mathbf{z}^t, \mathbf{z}^v$.
We use the cyclic property of the trace and the fact that $B^T\left(BB^T\right)^{-1}B$ is a projection.
The result is
\label{exprSquarediff}
\begin{align}
&\left|\boldsymbol\omega^\star-\mathbf{w}_0\right|^2=\mbox{Tr}\left[\Gamma\left(BB^T\right)^{-1}\right]+\nonumber\\
&+\left(\boldsymbol\omega_0-\mathbf{w}_0\right)^T\left[I_p-B^T\left(BB^T\right)^{-1}B\right]\left(\boldsymbol\omega_0-\mathbf{w}_0\right)
\end{align}
The matrix $\Gamma$ is defined as
\begin{equation}
\label{Gamma1}
\Gamma=\mathop{\mathbb{E}}_{\mathbf{w}}\mathop{\mathbb{E}}_{\mathbf{z}^t}\mathop{\mathbb{E}}_{\mathbf{z}^v}\delta\boldsymbol\gamma\;\delta\boldsymbol\gamma^T=\left(
\begin{matrix}
\Gamma^{(1)}&0&0\\0&\ddots&0\\0&0&\Gamma^{(m)}
\end{matrix}
\right)
\end{equation}
Where matrix blocks are given by the following expression
\begin{align}
\label{Gamma2}
&\Gamma^{(i)}=\frac{\nu^2}{n_i^vp}X_i^v\left(I_p-\frac{\alpha_i}{n_i^t}{X_i^t}^TX_i^t\right)^2{X_i^v}^T+\nonumber\\
&+\frac{\sigma_i^2}{n_i^v}\left(I_{n_i^v}+\frac{\alpha_i^2}{{n_i^t}^2}X_i^v{X_i^t}^TX_i^t{X_i^v}^T\right)
\end{align}
It is convenient to rewrite the scalar product of Eq.\ref{exprSquarediff} in terms of the trace of outer products
\label{avgomegastar_op}
\begin{align}
&\left|\boldsymbol\omega^\star-\mathbf{w}_0\right|^2=\nonumber\\
&=\mbox{Tr}\left[\left(BB^T\right)^{-1}\left(\Gamma-B\left(\boldsymbol\omega_0-\mathbf{w}_0\right)\left(\boldsymbol\omega_0-\mathbf{w}_0\right)^TB^T\right)\right]+\nonumber\\
&+\left|\boldsymbol\omega_0-\mathbf{w}_0\right|^2
\end{align}
In order to calculate $\mathbb{E}\left|\boldsymbol\omega^\star-\mathbf{w}_0\right|^2$ in Eq.\ref{losstest4}  we need to average this expression over training and validation data.
These averages are hard to compute since they involve nonlinear functions of the data.
However, we can approximate these terms by assuming that $p$ and $n_i^t$ are large, both of order $O(\xi)$, where $\xi$ is a large number.
Furthermore, we assume that $\left|\boldsymbol\omega_0-\mathbf{w}_0\right|$ is of order $O(\xi^{-1/4})$.
Using Lemma \ref{lemma2}, together with the expressions of $B$ (Eq.\ref{Bmatrix}) and $\Gamma$ (Eqs.\ref{Gamma1},\ref{Gamma2}), we can prove that
{\small
\begin{align}\label{BBT}
&\frac{1}{p}BB^T=\left(
\begin{matrix}
{n_1^v}^{-1}{\lambda_1}^2h_1I_{n_1^v}&0&0\\0&\ddots&0\\0&0&{n_m^v}^{-1}{\lambda_m}^2h_mI_{n_m^v}
\end{matrix}
\right)+ \nonumber\\
&+O\left(\xi^{-1/2}\right)
\end{align} 
}
\begin{align}
&\Gamma^{(i)}=\left\{\frac{\nu^2\lambda_i^2h_i}{n_i^v}+\frac{\sigma_i^2}{n_i^v}\left(1+\frac{\lambda_i^4\alpha_i^2p}{n_i^t}\right)\right\}I_{n_i^v}+\nonumber\\
&+O\left(\xi^{-1/2}\right)
\end{align}
{\small
\begin{align}
&B\left(\boldsymbol\omega_0-\mathbf{w}_0\right)\left(\boldsymbol\omega_0-\mathbf{w}_0\right)^TB^T=\nonumber\\
&=\left|\boldsymbol\omega_0-\mathbf{w}_0\right|^2\left(
\begin{matrix}
{{n_1^v}^{-1}\lambda_1}^2h_1I_{n_1^v}&0&0\\0&\ddots&0\\0&0&{n_m^v}^{-1}{\lambda_m}^2h_mI_{n_m^v}
\end{matrix}\right)+\nonumber\\
&O\left(\xi^{-1/2}\right)
\end{align}
}
where we define the following expression
\begin{align}
h_i=\left(1-\lambda_i^2\alpha_i\right)^2+\lambda_i^4\alpha_i^2\frac{p+1}{n_i^t}
\end{align}
Using Eq.\ref{BBT} and Taylor expansion, the inverse $\left(BB^T\right)^{-1}$ is equal to
{\tiny
\begin{align}
&\left(BB^T\right)^{-1}=\frac{1}{p}\left(
\begin{matrix}
n_1^v{\lambda_1}^{-2}{h_1}^{-1}I_{n_1^v}&0&0\\0&\ddots&0\\0&0&n_m^v{\lambda_m}^{-2}{h_m}^{-1}I_{n_m^v}
\end{matrix}
\right)+\nonumber\\
&+O\left(\xi^{-3/2}\right),
\end{align}
}
Substituting the three expressions above in Eq.\ref{avgomegastar_op}, and ignoring terms of lower order, we find
\begin{align}
&\mathbb{E}\left|\boldsymbol\omega^\star-\mathbf{w}_0\right|^2=\left(1-\frac{1}{p}\sum_{i=1}^mn_i^v\right)\left|\boldsymbol\omega_0-\mathbf{w}_0\right|^2+\nonumber\\
&+\frac{\nu^2}{p}\sum_{i=1}^mn_i^v+\frac{1}{p}\sum_{i=1}^m\frac{\sigma_i^2n_i^v}{\lambda_i^2h_i}\left(1+\frac{\lambda_i^4\alpha_i^2p}{n_i^t}\right)+O\left(\xi^{-3/2}\right)
\end{align}
Substituting this expression into in Eq.\ref{losstest4}, we find the value of average test loss
\begin{align}
&\overline{\mathcal{L}}^{test}=\frac{\sigma_r^2}{2}\left(1+\frac{\lambda_r^4\alpha_r^2p}{n_r}\right)+\nonumber\\
&+\frac{\lambda_r^2h_r}{2}\left(1-\frac{1}{p}\sum_{i=1}^mn_i^v\right)\left|\boldsymbol\omega_0-\mathbf{w}_0\right|^2+\nonumber\\
&+\frac{\lambda_r^2h_r\nu^2}{2}\left(1+\frac{1}{p}\sum_{i=1}^mn_i^v\right)+\nonumber\\
&+\frac{\lambda_r^2h_r}{2p}\sum_{i=1}^m\frac{\sigma_i^2n_i^v}{\lambda_i^2h_i}\left(1+\frac{\lambda_i^4\alpha_i^2p}{n_i^t}\right)+\nonumber\\
&+O\left(\xi^{-3/2}\right)
\end{align}
where we define the following expression
\begin{align}
h_r=\left(1-\lambda_r^2\alpha_r\right)^2+\lambda_r^4\alpha_r^2\frac{p+1}{n_r}
\end{align}

\subsection{Averaging over training data: underparameterized case} 

In the underparameterized case ($p<\sum_{i=1}^mn_i^v$), under the assumption that the inverse of $B^TB$ exists, the value of $\boldsymbol\omega$ that minimizes Eq.\ref{lossmetashort} is equal to
\begin{equation}
\label{omegastar_up}
\boldsymbol\omega^\star=\left(B^TB\right)^{-1}B^T\boldsymbol\gamma
\end{equation}
Note that the matrix $B$ does not depend on $\mathbf{w}, \mathbf{z}^t, \mathbf{z}^v$, and
$\mathop{\mathbb{E}}_{\mathbf{w}}\mathop{\mathbb{E}}_{\mathbf{z}^t}\mathop{\mathbb{E}}_{\mathbf{z}^v}\;\boldsymbol\gamma=B\mathbf{w}_0$.
We denote by $\delta\boldsymbol\gamma$ the deviation from the average, and we have
\begin{equation}
\left|\boldsymbol\omega^\star-\mathbf{w}_0\right|^2=\mbox{Tr}\left[\left(B^TB\right)^{-1}B^T\delta\boldsymbol\gamma\;\delta\boldsymbol\gamma^TB\left(B^TB\right)^{-1}\right]
\end{equation}
We need to average this expression in order to calculate $\mathbb{E}\left|\boldsymbol\omega^\star-\mathbf{w}_0\right|^2$ in Eq.\ref{losstest4}.
We start by averaging $\delta\boldsymbol\gamma\;\delta\boldsymbol\gamma^T$ over $\mathbf{w}, \mathbf{z}^t, \mathbf{z}^v$, since $B$ does not depend on those variables.
Note that $\mathbf{w}, \mathbf{z}^t, \mathbf{z}^v$ are independent on each other and across tasks.
As in previous section, we denote by $\Gamma$ the result of this operation, given by Eq.s\ref{Gamma1}, \ref{Gamma2}.
Finally, we need to average over the training and validation data
\begin{equation}
\label{avgomegastar_up}
\mathbb{E}\left|\boldsymbol\omega^\star-\mathbf{w}_0\right|^2=\mathop{\mathbb{E}}_{X^t}\mathop{\mathbb{E}}_{X^v}\mbox{Tr}\left[\left(B^TB\right)^{-1}B^T\Gamma B\left(B^TB\right)^{-1}\right]
\end{equation}
It is hard to average this expression because it includes nonlinear functions of the data.
However, we can approximate these terms by assuming that either $m$ or $\xi$ (or both) is a large number, where $\xi$ is defined by assuming that $n_i^t$ and $n_i^v$ are of order $O(\xi)$.
Using Lemma \ref{lemma3}, together with the expression of $B$ (Eq.\ref{Bmatrix}), and noting that each factor in Eq.\ref{avgomegastar_up} has a sum over $m$ independent terms, we can prove that
\begin{equation}
B^TB=\sum_{i=1}^m\lambda_i^2h_iI_p+O\left((m\xi)^{1/2}\right)
\end{equation}
Using this result and a Taylor expansion, the inverse is equal to
\begin{equation}
\left(B^TB\right)^{-1}=\left[\sum_{i=1}^m\lambda_i^2h_i\right]^{-1}I_p+O\left((m\xi)^{-3/2}\right)
\end{equation}
Similarly, the term $B^T\Gamma B$ is equal to its average plus a term of smaller order
\begin{equation}
B^T\Gamma B=\mathbb{E}\left(B^T\Gamma B\right)+O\left((m\xi)^{1/2}\right)
\end{equation}
We substitute these expressions in Eq.\ref{avgomegastar_up} and neglect lower orders.
Here we show how to calculate explicitly the expectation of $B^T\Gamma B$.
For ease of notation, we define the matrix $A_i^t=I-\frac{\alpha_i}{n_i^t}{X_i^t}^TX_i^t$.
Using the expressions of $B$ (Eq.\ref{Bmatrix}) and $\Gamma$ (Eqs.\ref{Gamma1},\ref{Gamma2}), the expression for $B^T\Gamma B$ is given by
\begin{align}
\label{BGB}
&B^T\Gamma B=\sum_{i=1}^m{n_i^v}^{-2}\sigma_i^2{A_i^t}^T{X_i^v}^TX_i^vA_i^t+\nonumber\\
&+\frac{\nu^2}{p}\sum_{i=1}^m{n_i^v}^{-2}\left({A_i^t}^T{X_i^v}^TX_i^vA_i^t\right)^2+\nonumber\\
&+\sum_{i=1}^m{n_i^v}^{-2}{n_i^t}^{-2}\alpha_i^2\sigma_i^2{A_i^t}^T{X_i^v}^TX_i^v{X_i^t}^TX_i^t{X_i^v}^TX_i^vA_i^t
\end{align}
We use Eqs.\ref{1ord}, \ref{2ord} to calculate the average of the first term in Eq.\ref{BGB}
\begin{align}
\mathop{\mathbb{E}}_{X^t}\mathop{\mathbb{E}}_{X^v}\sum_{i=1}^m{n_i^v}^{-2}\sigma_i^2{A_i^t}^T{X_i^v}^TX_i^vA_i^t=\sum_{i=1}^m{n_i^v}^{-1}\lambda_i^2\sigma_i^2h_iI_p
\end{align}
We use Eqs.\ref{1ord}, \ref{2ord}, \ref{3ord}, \ref{2ordA}, \ref{2ordTr}, \ref{3ordTr}, \ref{3ordTr2}, \ref{4ordTr} to calculate the average of the second term
\begin{align}
\label{E2}
&\mathop{\mathbb{E}}_{X^t}\mathop{\mathbb{E}}_{X^v}\sum_{i=1}^m{n_i^v}^{-2}\left({A_i^t}^T{X_i^v}^TX_i^vA_i^t\right)^2=\nonumber\\
&=\mathop{\mathbb{E}}_{X^t}\sum_{i=1}^m{n_i^v}^{-1}\lambda_i^4\left[\left(n_i^v+1\right){A_i^t}^4+{A_i^t}^2\mbox{Tr}\left({A_i^t}^2\right)\right]=\nonumber\\
&=\sum_{i=1}^m{n_i^v}^{-1}\lambda_i^4\left(n_i^v+1\right)\bigl(1-4\lambda_i^2\alpha_i+6\lambda_i^4\alpha_i^2\mu_2^{(i)}-\nonumber\\
&-4\lambda_i^6\alpha_i^3\mu_3^{(i)}+\lambda_i^8\alpha_i^4\mu_4^{(i)}\bigr)I_p+\nonumber\\
&+\sum_{i=1}^m{n_i^v}^{-1}\lambda_i^4p\bigl(1-4\lambda_i^2\alpha_i+2\lambda_i^4\alpha_i^2\mu_2^{(i)}4\lambda_i^4\alpha_i^2\mu_{1,1}^{(i)}-\nonumber\\
&-4\lambda_i^6\alpha_i^3\mu_{2,1}^{(i)}+\lambda_i^8\alpha_i^4\mu_{2,2}^{(i)}\bigr)I_p
\end{align}
where we have defined
\begin{align}
&\mu_2^{(i)}=\frac{1}{n_i^t}\bigl(n_i^t+p+1\bigr)\\
&\mu_3^{(i)}=\frac{1}{{n_i^t}^2}\bigl({n_i^t}^2+p^2+3n_i^tp+3n_i^t+3p+4\bigr)\\
&\mu_4^{(i)}=\frac{1}{{n_i^t}^3}\bigl({n_i^t}^3+p^3+6{n_i^t}^2p+6{n_i^t}p^2+6{n_i^t}^2+6p^2+\nonumber\\
&+17n_i^tp+21n_i^t+21p+20\bigr)\\
&\mu_{1,1}^{(i)}=\frac{1}{{n_i^t}^2p}\bigl({n_i^t}^2p+2n_i^t\bigr)\\
&\mu_{2,1}^{(i)}=\frac{1}{{n_i^t}^2p}\bigl({n_i^t}^2p+{n_i^t}p^2+n_i^tp+4n_i^t+4p+4\bigr)\\
&\mu_{2,2}^{(i)}=\frac{1}{{n_i^t}^3p}\bigl({n_i^t}^3p+n_i^tp^3+2{n_i^t}^2p^2+2{n_i^t}^2p+\nonumber\\
&+2n_i^tp^2+8{n_i^t}^2+8p^2+21n_i^tp+20n_i^t+20p+20\bigr)
\end{align}
Finally, we compute the average of the third term, using Eqs.\ref{1ord}, \ref{2ord}, \ref{3ord}, \ref{4ord}, \ref{2ordA}, \ref{2ordTr}, \ref{3ordTr}
{\small
\begin{align}
&\mathop{\mathbb{E}}_{X^t}\mathop{\mathbb{E}}_{X^v}\sum_{i=1}^m{n_i^v}^{-2}{n_i^t}^{-2}\alpha_i^2\sigma_i^2{A_i^t}^T{X_i^v}^TX_i^v{X_i^t}^TX_i^t{X_i^v}^TX_i^vA_i^t=\nonumber\\
&=\mathop{\mathbb{E}}_{X^t}\sum_{i=1}^m\frac{\lambda_i^4\alpha_i^2\sigma_i^2}{n_i^v{n_i^t}^2}\bigl[\bigl(n_i^v+1\bigr){A_i^t}^T{X_i^t}^TX_i^tA_i^t+\nonumber\\
&+{A_i^t}^TA_i^t\mbox{Tr}\bigl({X_i^t}^TX_i^t\bigr)\bigr]=\nonumber\\
\label{E3}
&=\sum_{i=1}^m\frac{\lambda_i^6\alpha_i^2\sigma_i^2}{n_i^vn_i^t}\bigl[\bigl(n_i^v+1\bigr)\bigl(1-2\lambda_i^2\alpha_i\mu_2^{(i)}+\lambda_i^4\alpha_i^2\mu_3^{(i)}\bigr)+\nonumber\\
&+p\bigl(1-2\lambda_i^2\alpha_i\mu_{1,1}^{(i)}+\lambda_i^4\alpha_i^2\mu_{2,1}^{(i)}\bigr)\bigr]I_p
\end{align}
}
Putting everything together in Eq.\ref{avgomegastar_up}, and applying the trace operator, we find the following expression for the meta-parameter variance
\begin{align}
\label{Eomegasquare}
&\mathbb{E}\left|\boldsymbol\omega^\star-\mathbf{w}_0\right|^2=p\Bigl[\sum_{i=1}^m\lambda_i^2h_i\Bigr]^{-2}\sum_{i=1}^m\frac{\lambda_i^2}{n_i^v}\Bigg\{\sigma_i^2h_i+\nonumber\\
&+\frac{\lambda_i^4\alpha_i^2\sigma_i^2}{n_i^t}\Bigl[\Bigl(n_i^v+1\Bigr)\bigl(1-2\lambda_i^2\alpha_i\mu_2^{(i)}+\lambda_i^4\alpha_i^2\mu_3^{(i)}\Bigr)+\nonumber\\
&+p\Bigl(1-2\lambda_i^2\alpha_i\mu_{1,1}^{(i)}+\lambda_i^4\alpha_i^2\mu_{2,1}^{(i)}\Bigr)\Bigr]+\nonumber\\
&+\frac{\nu^2}{p}\lambda_i^2\bigg[\left(n_i^v+1\right)\Bigl(1-4\lambda_i^2\alpha_i+6\lambda_i^4\alpha_i^2\mu_2^{(i)}-\nonumber\\
&-4\lambda_i^6\alpha_i^3\mu_3^{(i)}+\lambda_i^8\alpha_i^4\mu_4^{(i)}\Bigr)+\nonumber\\
&+p\Bigl(1-4\lambda_i^2\alpha_i+2\lambda_i^4\alpha_i^2\mu_2^{(i)}+4\lambda_i^4\alpha_i^2\mu_{1,1}^{(i)}-\nonumber\\
&- 4\lambda_i^6\alpha_i^3\mu_{2,1}^{(i)}+\lambda_i^8\alpha_i^4\mu_{2,2}^{(i)}\Bigr)\bigg]\Bigg\}+O\left((m\xi)^{-3/2}\right)
\end{align}
We rewrite this expression as
\begin{align}
&\mathbb{E}\left|\boldsymbol\omega^\star-\mathbf{w}_0\right|^2=p\left[\sum_{i=1}^m\lambda_i^2h_i\right]^{-2}\sum_{i=1}^m\frac{\lambda_i^2}{n_i^v}\Bigg\{\nonumber\\
&\sigma_i^2\left[h_i+\frac{\lambda_i^4\alpha_i^2}{n_i^t}\left[\left(n_i^v+1\right)g_{1i}+pg_{2i}\right]\right]+\nonumber\\
&+\frac{\nu^2}{p}\lambda_i^2\left[\left(n_i^v+1\right)g_{3i}+pg_{4i}\right]\Bigg\}+O\left((m\xi)^{-3/2}\right)
\end{align}
where we defined the following expressions for $g_i$
\begin{align}
&g_{1i}=1-2\lambda_i^2\alpha_i\mu_2^{(i)}+\lambda_i^4\alpha_i^2\mu_3^{(i)}\\
&g_{2i}=1-2\lambda_i^2\alpha_i\mu_{1,1}^{(i)}+\lambda_i^4\alpha_i^2\mu_{2,1}^{(i)}\\
&g_{3i}=1-4\lambda_i^2\alpha_i+6\lambda_i^4\alpha_i^2\mu_2^{(i)}-4\lambda_i^6\alpha_i^3\mu_3^{(i)}+\nonumber\\
&+\lambda_i^8\alpha_i^4\mu_4^{(i)}\\
&g_{4i}=1-4\lambda_i^2\alpha_i+2\lambda_i^4\alpha_i^2\mu_2^{(i)}+4\lambda_i^4\alpha_i^2\mu_{1,1}^{(i)}-\nonumber\\
&-4\lambda_i^6\alpha_i^3\mu_{2,1}^{(i)}+\lambda_i^8\alpha_i^4\mu_{2,2}^{(i)}
\end{align}
Substituting this expression back into Eq.\ref{losstest4} returns the final expression for the average test loss, equal to
\begin{align}\label{gpoly1}
&\overline{\mathcal{L}}^{test}=\frac{\sigma_r^2}{2}\left(1+\frac{\lambda_r^4\alpha_r^2p}{n_r}\right)+\frac{\lambda_r^2h_r\nu^2}{2}+\nonumber\\
&+\frac{\lambda_r^2h_rp}{2}\left[\sum_{i=1}^m\lambda_i^2h_i\right]^{-2}\sum_{i=1}^m\frac{\lambda_i^2}{n_i^v}\Bigg\{\nonumber\\
&\sigma_i^2\left[h_i+\frac{\lambda_i^4\alpha_i^2}{n_i^t}\left[\left(n_i^v+1\right)g_{1i}+pg_{2i}\right]\right]+\nonumber\\
&\frac{\nu^2}{p}\lambda_i^2\left[\left(n_i^v+1\right)g_{3i}+pg_{4i}\right]\Bigg\}+O\left((m\xi)^{-3/2}\right)
\end{align}
%


\section{Optimal uniform allocation in mixed linear regression}
\label{sec:optuniall}

In order to prove Theorem \ref{optallocThm} in the main text, under the assumptions of Theorem \ref{underThm}, we start by rewriting Equation (\ref{testloss}) in the case of homogeneous tasks ($\sigma_i=\sigma$, $\lambda_i=\lambda$), fixed learning rate ($\alpha_i=\alpha$) and uniform allocation ($n_i=n$).
That is similar to the expression of Theorem 2 in \cite{bernacchia_meta-learning_2021}.
In this case, the budget is equal to $b=2nm$.
We further assume that, in addition to $n$ and $m$, $p$ is also large, of order $\Theta(\xi)$.
Then the average test loss is equal to

\begin{align}
&\overline{\mathcal{L}}^{test}=C_1+\frac{C_2p}{b}g_2^{-2}\left\{\sigma'^2\left[g_2+\alpha'^2\left(g_1+\frac{p}{n}g_2\right)\right]+\right.\nonumber\\
\label{Ltestsimp}
&\left.+\nu^2\left[\frac{n}{p}g_3+g_4\right]\right\}+O\left(\xi^{-2}\right)
\end{align}
where $\sigma'=\frac{\sigma}{\lambda}$, $\alpha'=\lambda^2\alpha$, and we have defined $g_1, g_2, g_3, g_4$ as
\begin{align}
&g_1=\left(1-\alpha'\right)^2-2\alpha'\frac{p}{n}+\alpha'^2\left(3\frac{p}{n}+\frac{p^2}{n^2}\right)\\
&g_2=\left(1-\alpha'\right)^2+\alpha'^2\frac{p}{n}\\
&g_3=\left(1-\alpha'\right)^4+6\alpha'^2\frac{p}{n}-\alpha'^3\left(12\frac{p}{n}+4\frac{p^2}{n^2}\right)+\\
&+\alpha'^4\left(6\frac{p}{n}+6\frac{p^2}{n^2}+\frac{p^3}{n^3}\right)\\
&g_4=\left(1-\alpha'\right)^4+2\alpha'^2\frac{p}{n}-4\alpha'^3\frac{p}{n}+\nonumber\\
&+\alpha'^4\left(2\frac{p}{n}+\frac{p^2}{n^2}\right)
\end{align}
and $C_1$, $C_2$ do not depend on $n$ and are equal to
\begin{align}
C_1=\frac{\sigma_r^2}{2}\left(1+\frac{\lambda_r^4\alpha_r^2p}{n_r}\right)+\frac{\nu^2}{2}C_2\\
C_2=\lambda_r^2\left(1-\lambda_r^2\alpha_r\right)^2+\lambda_r^6\alpha_r^2\frac{p}{n_r}
\end{align}
where the subscript $r$ denotes meta-testing hyperparameters.

We drop the term $O\left(\xi^{-2}\right)$, and we optimize the test loss in Eq.(\ref{Ltestsimp}) for the number of data points per task $n$, by taking the gradient of $\overline{\mathcal{L}}^{test}$ with respect to $n$ and set it to zero.
We obtain the following expression
\begin{align}
\frac{\partial \overline{\mathcal{L}}^{test}}{\partial n}=\frac{C_2p}{2b}\frac{\Gamma}{\left(\frac{ng_2}{p}\right)^3}
\end{align}
where $\Gamma$ is a cubic expression in $n/p$, equal to
\begin{align}
&\Gamma=\nu^2(1-\alpha')^6\frac{n^3}{p^3} + 3\nu^2\alpha'^2(1-\alpha')^4\frac{n^2}{p^2} +\\
&+ 2\alpha'^3\Big[\nu^2(2-\alpha'-4\alpha'^2+3\alpha'^3)+\\ &+\sigma'^2(2-5\alpha'+4\alpha'^2-\alpha'^3) \Big]\frac{n}{p} + \\ &+2\alpha'^4\left[\nu^2(2\alpha'^2-1) + \sigma'^2(2\alpha'-1) \right]=0
\end{align}
The zeros of the cubic represent stationary points, maxima or minima, of the loss as a function of $n$.
For ease of notation, we rewrite the cubic as
\begin{align}
&A\left(\frac{n}{p}\right)^3B+\left(\frac{n}{p}\right)^2 +C\left(\frac{n}{p}\right) +D=0
\end{align}
where the coefficients $A$, $B$, $C$, $D$ are defined as
\begin{align}
&A=\nu^2(1-\alpha')^6\\
&B=3\nu^2\alpha'^2(1-\alpha')^4\\
&C=2\alpha'^3\Big[\nu^2(2-\alpha'-4\alpha'^2+3\alpha'^3)+\\
&+\sigma'^2(2-5\alpha'+4\alpha'^2-\alpha'^3) \Big]\\
&D=2\alpha'^4\left[\nu^2(2\alpha'^2-1) + \sigma'^2(2\alpha'-1) \right]
\end{align}
For small $\alpha'$, these coefficients are of different orders, in particular $A$ is $O(1)$, $B$ is of order $O(\alpha'^2)$,  $C$ is of order $O(\alpha'^3)$,  $D$ is of order $O(\alpha'^4)$.
The discriminant of the cubic is equal to
\begin{align}
-27A^2D^2+18ABCD-4AC^3-4B^3D+B^2C^2
\end{align}
We arranged the terms in this expression according to their $\alpha'$ order, which is equal to, respectively, $O(\alpha'^8)$, $O(\alpha'^9)$, $O(\alpha'^9)$, $O(\alpha'^{10})$, $O(\alpha'^{10})$.
Note that the leading term is always negative, therefore there exists a value $\alpha'^\star$ such that, for $|\alpha'|<\alpha'^\star$, the discriminant of the cubic is negative, which means that the cubic has only one real solution.
This stationary point has to be a minimum, since the derivative of the average test loss exists everywhere (note that $g_2>0$) and it becomes large and positive for large positive or negative $n$. 

The cubic equation can be solved using standard methods, such as Cardano's method and depressed cubic transformation.
The solution is equal to
{\small
\begin{align}
\label{optimaln}
\nonumber
&n^\star=-\frac{p}{3A}\left[B+\left(\frac{-1+\sqrt{-3}}{2}\right)^k\left(\frac{\Delta_1+\sqrt{\Delta_1^2-4\Delta_0^3}}{2}\right)^{\frac{1}{3}}\right.\\
&\left.+\left(\frac{-1+\sqrt{-3}}{2}\right)^{-k}\left(\frac{2\Delta_0^3}{\Delta_1+\sqrt{\Delta_1^2-4\Delta_0^3}}\right)^{\frac{1}{3}}\right]
\end{align}
}
for $k=0,1,2$, where the real and positive solution among the three solutions should be selected, and $\Delta_0$, $\Delta_1$ are defined as
\begin{align}
&\Delta_0=B^2-3AC\\
&\Delta_1=2B^3-9ABC+27A^2D
\end{align}
Note that in the main text we defined the number of data points per task as $N=n_t+n_v=2n$, therefore we have $N^\star=2n^\star$ and
{\small
\begin{align}
\nonumber
&N^\star=-\frac{2p}{3A}\left[B+\left(\frac{-1+\sqrt{-3}}{2}\right)^k\left(\frac{\Delta_1+\sqrt{\Delta_1^2-4\Delta_0^3}}{2}\right)^{\frac{1}{3}}\right.\\
&\left.+\left(\frac{-1+\sqrt{-3}}{2}\right)^{-k}\left(\frac{2\Delta_0^3}{\Delta_1+\sqrt{\Delta_1^2-4\Delta_0^3}}\right)^{\frac{1}{3}}\right]
\label{Nsolution}
\end{align}
}
for $k=0,1,2$, where the real and positive solution among the three solutions should be taken.

The expression for $N^\star$ is rather complicated and it is hard to evaluate how the optimum depends on the hyperparameters of the model.
Therefore we computed an approximation that holds for small $|\alpha'|$.
Of the three terms in square brackets of Eq.(\ref{Nsolution}), the first one is of order $O(\alpha'^3)$, the second is $O(\alpha'^{4/3})$ and the third is $O(\alpha'^{5/3})$, thus we neglect the first and last term.
Furthermore, $\Delta_0^3$ is of order $O(\alpha'^9)$, while $\Delta_1^2$ is $O(\alpha'^8)$, thus we neglect $\Delta_0$, and we keep only the leading term in $\Delta_1$.
We obtain
\begin{align}
\label{nstarapprox}
N^\star\simeq 2\left[2\left(1+\frac{\sigma'^2}{\nu^2}\right)\right]^{\frac{1}{3}}\alpha'^{\frac{4}{3}}p
\end{align}

 \end{document}